\newtheorem{theorem}{Theorem}[section]
\newtheorem{lemma}{Lemma}[section]
\def\@fnsymbol#1{\ensuremath{\ifcase#1\or *\or \dagger\or \ddagger\or
   \mathsection\or \mathparagraph\or \|\or **\or \dagger\dagger
   \or \ddagger\ddagger \else\@ctrerr\fi}}
\title{On the Value of Myopic Behavior in Policy Reuse}
\author{%
  Kang Xu$^{\text{1}~\text{2}}$\thanks{This work was done during Kang Xu's internship at Shanghai Artificial Intelligence Laboratory.}\qquad
  Chenjia Bai$^\text{2}$
  \thanks{Correspondence to Chenjia Bai <baichenjia@pjlab.org.cn>, Wei Li <fd$\_$liwei@fudan.edu.cn>.}\qquad
  Shuang Qiu$^\text{3}$\qquad
  Haoran He$^\text{4}$\qquad
  Bin Zhao$^{\text{2}~\text{5}}$\qquad\\
  \AND
  \textbf{Zhen Wang$^\text{5}$}\qquad
  \textbf{Wei Li$^{\text{1}\dagger}$}\qquad
  \textbf{Xuelong Li$^{\text{2}~\text{5}}$}\qquad
  \AND
  \textnormal{$^\text{1}~$Fudan University}\quad
  \textnormal{$^\text{2}~$Shanghai Artificial Intelligence Laboratory}\quad
  \textnormal{$^\text{3}~$HKUST}\\
  \textnormal{$^\text{4}~$Shanghai Jiao Tong University}\quad
  \textnormal{$^\text{5}~$Northwestern Polytechnical University}
}
\begin{document}

\maketitle

\begin{abstract}
    Leveraging learned strategies in unfamiliar scenarios is fundamental to human intelligence. In reinforcement learning, rationally reusing the policies acquired from other tasks or human experts is critical for tackling problems that are difficult to learn from scratch. In this work, we present a framework called Selective Myopic bEhavior Control~(SMEC), which results from the insight that the short-term behaviors of prior policies are sharable across tasks. By evaluating the behaviors of prior policies via a hybrid value function architecture, SMEC adaptively aggregates the sharable short-term behaviors of prior policies and the long-term behaviors of the task policy, leading to coordinated decisions. Empirical results on a collection of manipulation and locomotion tasks demonstrate that SMEC outperforms existing methods, and validate the ability of SMEC to leverage related prior policies.
\end{abstract}

\section{Introduction}
\label{sec:intro}
Reinforcement learning has demonstrated a wide range of successes~\cite{akkaya2019solving,lee2020learning,berner2019dota} by learning from scratch. While effective, a major challenge is the need for agents to acquire extensive experience, which can be costly and time-consuming, especially in real-world scenarios. Contrary to learning without prior knowledge, human intelligence can quickly identify the relationships between the current task and previous experience, thereby facilitating the completion of novel tasks by deploying learned strategies. Building upon this observation, we focus on policy reuse with a collection of prior policies for efficient learning in downstream tasks~\cite{fernandez2006probabilistic,rosman2016bayesian}.

Intuitively, more prior policies could lead to more efficient learning by utilizing the abundant knowledge of the prior policies. However, reusing the policies without knowing their properties can be non-trivial, since some policies can provide irrelevant or even harmful behaviors concerning the current task. Thus, how to \textit{efficiently identify the reusable policies} and \textit{rationally exploit the policies} are the essential problems of policy reuse. Previous policy reuse algorithms broadly fall into three categories: advantage-based, aggregation-based, and behavior-based methods. Advantage-based algorithms~\cite{cheng2020policy,zhangcup} exploit the one-step advantage induced by the advised actions from prior policies for policy regularization. Aggregation-based algorithms~\cite{laroche2017multi,gimelfarb2021contextual,barekatain2021multipolar} compose actions from all prior policies via learning the mixture functions. Unlike the first two categories, behavior-based algorithms~\cite{li2019context,vezzani2022skills,yang2020efficient,kurenkov2020ac} utilize the temporally-extended behaviors of prior policies to guide online interactions, which makes them particularly appealing, as the agent can deploy the advantageous actions from prior policies. In addition, the independent task policy that does not build on prior policies is computationally practical. However, existing behavior-based methods assume access to related prior policies concerning the current task, limiting the generality of these methods.

\begin{figure}[t]
    \centering
    \includegraphics[width=0.95\textwidth]{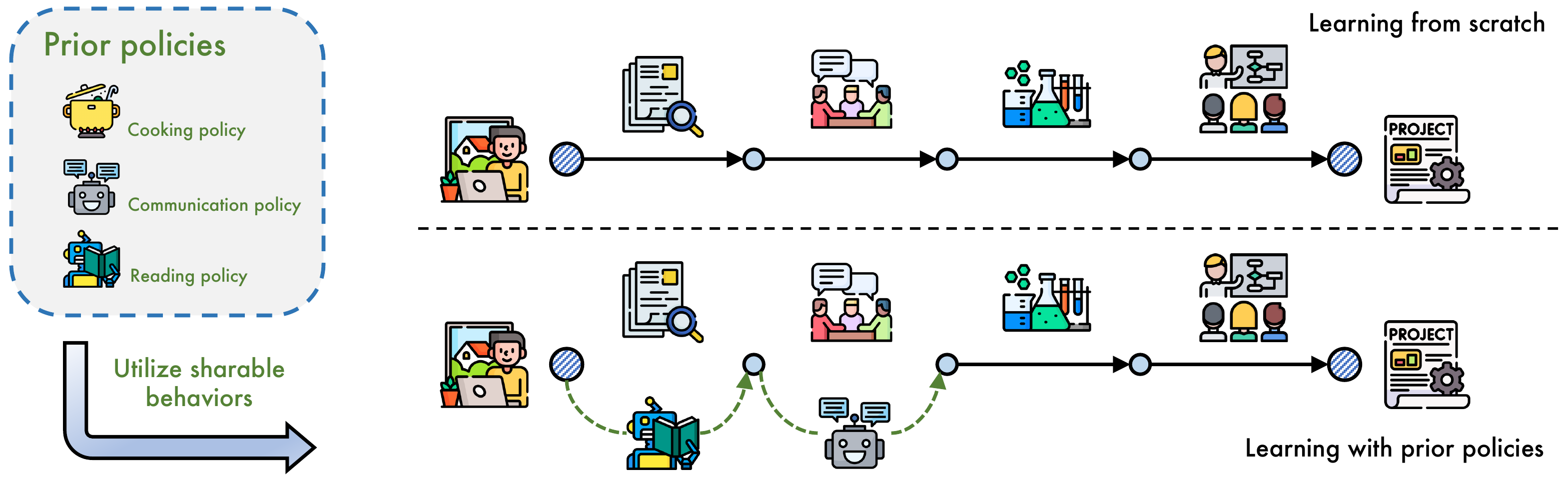}
    \caption{
    Contrary to accomplishing tasks~(\textit{i.e.}, a project) by learning from scratch, human typically decompose the task into multiple stages and deploy learned policies to solve the corresponding sub-tasks. Inspired by this, we propose that the sharable short-term behaviors of prior policies concerning the current task can be identified and exploited for efficient learning.
    }
    \vspace{-1.em}
    \label{fig:behavior_sharing_illustration}
\end{figure}

Imagine we are solving a project which can be decomposed into multiple stages~(\textit{i.e.},~survey, discussion, experiments, and presentation) as shown in Figure~\ref{fig:behavior_sharing_illustration}, the policies learned in previous tasks such as reading and communication will be quickly identified and exploited for the corresponding short-term sub-tasks, even if the prior policies are not directly relevant to the current goal. 
Motivated by this, our key insight is that \textit{the short-term behaviors of prior policies are sharable across tasks}. To operationalize the idea, we propose \textbf{Se}lective \textbf{M}yopic b\textbf{E}havior \textbf{C}ontrol~(\textbf{SMEC}), which adaptively exploits these short-term behaviors to facilitate learning. Specifically, SMEC switches between policies for short-term interactions and adaptively utilizes the beneficial behaviors of prior policies by evaluating their short-term performance. To select the most effective policy for the subsequent interactions, SMEC compares the value estimations of \textit{long-term} task policy behaviors and \textit{short-term} prior policy behaviors at each switch point. We propose a hybrid value function architecture that evaluates behaviors across all policies, enabling scalability to a large number of prior policies. By identifying and utilizing the beneficial short-term behaviors, SMEC guides the online interactions at the early training stage, accelerating the learning process. As the training proceeds, prior policies are automatically weaned off due to the improved values of the task policy behaviors, which circumvents the sub-optimal behaviors of prior policies to hinder training.

We summarize the main contributions of our approach as follows: (1)~We highlight that the short-term behaviors are sharable across tasks, which can be leveraged in policy reuse. (2)~We propose a simple yet efficient algorithm termed Selective Myopic bEhavior Control~(SMEC) that performs behavior planning via evaluating the short-term behaviors of prior policies~(Section~\ref{sec:method}). (3)~We verify the effectiveness of the proposed approach by conducting extensive experiments on various tasks, with comparison to several baseline methods~(Section~\ref{sec:exp}).

\section{Preliminaries and Formulation}
\label{sec:preliminary}

\textbf{Markov decision processes~(MDPs).}~
A Markov decision process~(MDP)~\cite{howard1960dynamic} is specified by a state space $\mathcal{S}$, action space $\mathcal{A}$, transition probabilities $\mathcal{P}: \mathcal{S}\times\mathcal{A}\rightarrow \Delta(\mathcal{S})$, reward function $r\in [0, R_{\rm max}]: \mathcal{S}\times\mathcal{A}\rightarrow \mathbb{R}$, initial state distribution $d_0: \mathcal{S}\rightarrow \mathbb{R}$, and a discount factor $\gamma\in [0,1)$. In this paper, we focus on the infinite-horizon case,  where an agent interacts with the environment using a policy $\pi \in \Pi: \mathcal{S}\rightarrow \Delta(\mathcal{A})$ to generate the trajectories $\tau:= (s_0, a_0,s_1,a_1,\dots)$ with distribution $\mathrm{P}^\pi(\tau)=d_0(s_0)\prod_{t=0}^{\infty}\pi(a_t|s_t)\mathcal{P}(s_{t+1}|s_t,a_t)$. The performance of the policy is quantified as the expectation of the discounted return $J_\gamma(\pi):= \mathbb{E}_{\mathrm{P}^\pi(\tau)}\left[\sum_{t=0}^\infty \gamma^t r(s_t,a_t)\right]$. The goal is to find the optimal policy $\pi^* := \arg\max_{\pi\in\Pi}~\mathbb{E}_{\mathrm{P}^\pi(\tau)}\left[\sum_{t=0}^\infty \gamma^t r(s_t,a_t)\right]$. 

\textbf{Visitation distributions and value functions.}~
We denote $\mathbb{P}^\pi_t: \mathcal{S}\rightarrow [0, 1]$ as the state distribution at time $t$ induced by the policy $\pi$ starting from the initial state distribution and define the discounted state visitation distribution $d^\pi(s):= (1-\gamma)\mathbb{E}\left[\sum_{t=0}^\infty \gamma^t \mathbb{P}^\pi_t(s) | s_0 \sim d_0(\cdot)\right]$. Similarly, we define the discounted state-action visitation distribution as $\rho^\pi(s, a):=(1-\gamma)\mathbb{E}\left[\sum_{t=0}^\infty \gamma^t \mathbb{P}^\pi_t(s)\pi(a|s) | s_0 \sim d_0(\cdot)\right]$. The value function $V_\gamma^\pi(s):=\mathbb{E}\left[\sum_{t=0}^{\infty}\gamma^t r(s_t,a_t)|s_0 = s \right]$ and the state action value function $Q^\pi_\gamma(s,a):=\mathbb{E}\left[\sum_{t=0}^{\infty}\gamma^t r(s_t,a_t)|s_0 = s, a_0 = a\right]$ quantify the expected return induced by the policy $\pi$ starting from certain state or state-action pair, respectively. Due to the large state and action space in modern problems, existing works typically introduce the function approximators such as neural networks to estimate the value functions~\cite{van2016deep,van2019use,schulman2017proximal,haarnoja2018soft}, \textit{e.g.}, $Q_\theta$ with parameters $\theta\in \mathbb{R}^d$.

\textbf{Problem formulation.}~
Throughout this work, we aim to optimize a task-specific policy $\pi$ concerning the current task $\mathcal{M}$, with the assistance of the prior policies $\{\mu_i: \mathcal{S}\rightarrow\mathcal{A}\}_{i=1}^K$, hoping to achieve sample-efficient learning. Concretely, we assume the prior policies are learned from the shifted MDP with different reward functions $\{r_i\}_{i=1}^K$ or transition probabilities $\{\mathcal{P}_i\}_{i=1}^K$, and the performance of the prior policies in the current task is unknown. 
Unlike previous works that only consider different reward functions (\textit{e.g.}, Meta RL~\cite{rakelly2019efficient}), we extend our analysis to include different dynamics, ensuring the generality of our setting and investigating the universality of our approach. 

\section{Method}
\label{sec:method}

To rationally exploit the prior policies, it is crucial to evaluate the behaviors of prior policies in the current task and utilize their beneficial behaviors. To accomplish these goals, we first propose to evaluate the prior policies concerning their short-term behaviors~(Section~\ref{sec:method:evaluation}). Then we introduce value-guided behavior planning based on the evaluation of prior policies~(Section~\ref{sec:method:exploit}). Furthermore, we introduce theoretical analysis for the induced behavior policy~(Section~\ref{sec:method:theory}). For the pseudocode of our method, please refer to Algorithm~\ref{alg:pseudocode} in Appendix~\ref{app:pseudocode}.

\subsection{Evaluate Prior Policies Myopically}
\label{sec:method:evaluation}
Starting from the insight that the beneficial short-term behaviors of the prior policies can assist in accomplishing the current task, we need to estimate the returns of all prior policy behaviors within a short horizon during the online interactions. For this goal, the common option is to perform planning with a world model that can be learned with the collected transitions~\cite{chua2018deep,sekar2020planning,hafner2019learning}. However, the number of prior policies scales the computation cost induced by planning with all policies. In addition, the distribution shift between the training data for the world model and the actions induced by the prior policies can incur spurious evaluation. 

From another perspective, the value function learned with the collected data provides the expected return estimation of the behavior policy within a specific horizon~\cite{sutton2018reinforcement}. Reducing the discount factor used for the value estimation will shorten the horizons that the value function considers. Consider that we limit the length of the short-term behaviors to $h$ and perform the evaluation that only concerns the behaviors within $h$ future steps, we propose to use a truncated behavior discount factor $\bar{\gamma}$ that satisfies $\bar{\gamma}^h \approx 0$ to achieve the truncated behavior evaluation. In practice, by using a constant $\epsilon \gtrapprox 0$, we define the discount factor that truncates the values after $h$ steps as $\bar{\gamma}:=\epsilon^{\frac{1}{h}}$. 

Given a collection of prior policies $\{\mu_i\}_{i=1}^K$, we aim to evaluate the short-term behaviors of each prior policy with respect to the current task. Let $Q^{\mu_i}_{\bar{\gamma}}(s,a): \mathcal{S}\times\mathcal{A}\rightarrow \mathbb{R},~i\in[1,K]$ denotes the short-term value function concerning the prior policy $\mu_i$. During training, we perform temporal-difference learning~\cite{munos2008finite} that is widely used in modern off-policy algorithms~\cite{fujimoto2018addressing,haarnoja2018soft}.
However, the computation cost required for the value estimation depends on the number of prior policies. If the number of prior policies is large, training an individual value function for each prior policy will be computationally expensive. To address the problem, we propose an aggregated value function architecture that simultaneously performs value estimation over all policies, as shown in Figure~\ref{fig:method_semantic}~(Left). Let $Q_{\theta}:\mathcal{S}\times\mathcal{A}\rightarrow \mathbb{R}^{1+K}$ denotes the aggregated value function and $\{Q_{\theta}^{\pi,\gamma}(s, a)\}\cup\{Q_{\theta}^{\mu_i,\bar{\gamma}}(s, a)\}_{i=1}^K$ denote the value estimations over the specific policies, we train the value function by minimizing the following objective:
\begin{align}
    &\quad~ J(\theta):=\dfrac{1}{2}\mathbb{E}\left[\left(Q_{\theta}^{\pi,\gamma}(s,a) - \mathcal{T}^{\pi}_{\gamma}(s',r)\right)^2 + \sum_{i=1}^K \left(Q_{\theta}^{\mu_i,\bar{\gamma}}(s,a) - \mathcal{T}^{\mu_i}_{\bar{\gamma}}(s',r)\right)^2\right],\label{eq:value_loss_hybrid}\\
    \text{where}~~&\mathcal{T}^{\pi}_{\gamma}(s',r) := r + \gamma~\mathbb{E}_{a'\sim \pi(\cdot|s')}\left[ Q_{\bar{\theta}}^{\pi,\gamma}(s',a')\right],\nonumber \qquad\qquad~~~\text{(long-term task policy operator)}\\
    \qquad\quad~~&\mathcal{T}^{\mu_i}_{\bar{\gamma}}(s',r) := r + \bar{\gamma}~\mathbb{E}_{a'\sim \mu_i(\cdot|s')}\left[ Q_{\bar{\theta}}^{\mu_i,\bar{\gamma}}(s',a')\right],\nonumber \qquad\quad~\text{(short-term prior policy operator)}
\end{align}
where $\bar{\theta}$ denotes the parameters of the target network. With different horizons $\gamma$ and $\bar{\gamma}$, the value estimations on task policy $\pi$ and prior policies $\{\mu_i\}_{i=1}^K$ consider long and short-term behaviors, respectively. 
We achieve computational efficiency and scalability through the shared architecture, which is crucial for handling abundant prior policies.

\begin{figure}
    \centering
    \includegraphics[width=\textwidth]{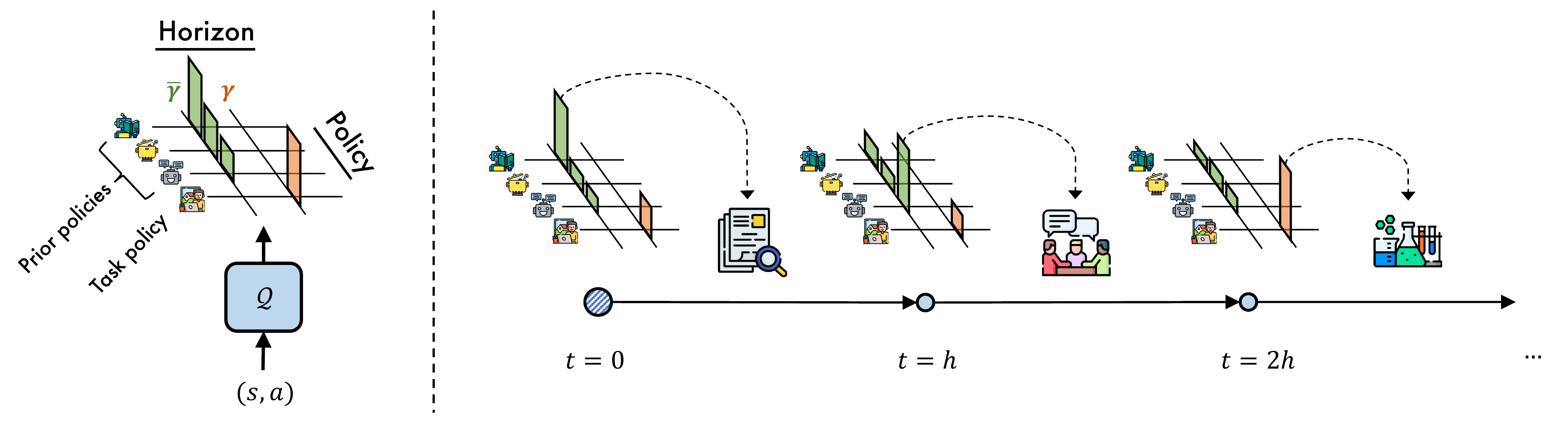}
    \caption{\textit{(Left)}~The architecture of the hybrid value function that estimates the values over all policies with different horizons.~\textit{(Right)}~The semantic illustration of value-guided behavior planning.}
    \vspace{-1em}
    \label{fig:method_semantic}
\end{figure}

\subsection{Value-guided Behavior Planning}
\label{sec:method:exploit}
We perform value-guided behavior planning every $h$ step to exploit prior policies' short-term behaviors, as shown in Figure~\ref{fig:method_semantic}~(Right).
Specifically, we compare the value estimations of all policies, namely $\{Q_{\theta}^{\pi,\gamma}(s, a)\}\cup\{Q_{\theta}^{\mu_i,\bar{\gamma}}(s, a)\}_{i=1}^K$, and choose the policy that yields the highest value estimation for the subsequent $h$-step interactions, which results in a behavior policy $\eta$ that can be formulated as:
\begin{equation}
    \eta(\cdot|s_t) := \underset{\nu\in\{\pi\}\cup\{\mu_i\}_{i=1}^K}{\arg\max} Q^\nu_{\theta}(s_{t^-},a_{t^-})~(\cdot|s_t),
    \label{eq:q_guided_behavior_policy}
\end{equation}
where $t^-:= \lfloor\frac{t}{h}\rfloor\cdot h$ denotes the time step of the last switch point, and the discount factor superscripts are omitted for simplicity. The policy switch only occurs every $h$ step, and the selected policy governs the interactions for the next $h$ steps.

Intuitively, short-term behavior estimation might induce myopic behaviors that hinder the agent from collecting long-term optimal transitions. However, such utilization of prior policies can still facilitate learning through potentially sharable short-term behaviors. At the early training stage, the immature task policy almost induces low behavior evaluation $Q^{\pi,\gamma}_{\theta}$. In contrast, prior policies with semantically meaningful behaviors can provide short-term beneficial behaviors. By performing value-guided behavior planning, we greedily conduct the most promising behaviors based on the value estimations. As the training proceeds, the performance of the task policy $\pi$ improves, which induces higher value estimations $Q^{\pi,\gamma}_\theta$. Thus, the value-guided behavior planning weans off the prior policies automatically when the performance of the task policy improves, eliminating the negative impact of the prior policies at the later training stage. 

In order to further leverage the behaviors of prior policies for diverse experience collection, it is essential to try different policies for complex, temporally-extended behaviors.
To achieve this, we introduce a heuristic method inspired by upper confidence bound~(UCB)~\cite{auer2002using}. Specifically, we combine the value estimations and the policy selection counts to perform the behavior planning formulated as follows:
\begin{equation}
    \tilde{\eta}(\cdot|s_t) := \underset{\nu\in\{\pi\}\cup\{\mu_i\}_{i=1}^K}{\arg\max} \left[ Q^\nu_{\theta}(s_{t^-},a_{t^-}) + c\cdot \sqrt{\dfrac{\log(2T)}{N_\nu + N_{\nu^-\rightarrow\nu}}}\right]~(\cdot|s_t),
    \label{eq:q_ucb_guided_behavior_policy}
\end{equation}
where $t^-:= \lfloor\frac{t}{h}\rfloor\cdot h$ denotes the time step of the last switch point, $c$ denotes the trade-off coefficient that can be regarded as a hyperparameter, $T$ denotes the total policy selection counts, $N_\nu$ denotes the counts of selecting policy $\nu$, $N_{\nu^-\rightarrow \nu}$ denotes the counts of the transformation from the last selected policy $\nu^-$ to policy $\nu$. We introduce the transformation counts $N_{\nu^-\rightarrow \nu}$ to obtain diverse behavior patterns by encouraging various policy combinations at the switch points.

\subsection{Theoretical Analysis}
\label{sec:method:theory}
This section analyzes the behavior policy $\eta$ induced by value-guided behavior planning. Since the behaviors of the prior policies can be sub-optimal or even harmful in the current task, deploying the prior policies might result in worse performance than only using the task policy. Thus, we aim to provide a performance guarantee on the behavior policy $\eta$ induced by Eq.~(\ref{eq:q_guided_behavior_policy}).

\begin{theorem}
    \label{theorem:value_bound}
    Following the behavior policy induced by Eq.~(\ref{eq:q_guided_behavior_policy}), when the prior policy $\bar{\mu}:= \arg\max_{\mu\in\{\mu_i\}_{i=1}^K}V^\mu_{\bar{\gamma}}(s_j)$ meets $V^{\bar{\mu}}_{\bar{\gamma}}(s_j)\geq V^\pi_{\gamma}(s_j), \forall s_j \in S, \exists~j \in [0, h, 2h, \dots]$, $\bar{\gamma}<\gamma$, and the policy $\eta$ is fixed after the switch. the induced value of $\eta$ can be bounded as follows:
    \begin{equation*}
        V_\gamma^{\eta}(s_j) - V_\gamma^{\pi}(s_j) ~\geq~ \dfrac{\gamma-\bar{\gamma}}{(1-\gamma)(1-\bar{\gamma})}R_{\text{max}}>0.
    \end{equation*}
\end{theorem}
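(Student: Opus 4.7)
The plan is to reduce the claim to a single-policy value comparison at $s_j$ and then split the resulting gap by horizon. First I would use the value-guided selection rule of Eq.~(\ref{eq:q_guided_behavior_policy}) to pin down $\eta$ at the switch point. Since $\bar\mu$ is defined as $\arg\max_\mu V^\mu_{\bar\gamma}(s_j)$, its $Q^{\bar\mu,\bar\gamma}$ value dominates the analogous values of the other prior policies, and the hypothesis $V^{\bar\mu}_{\bar\gamma}(s_j)\ge V^\pi_\gamma(s_j)$ adds that it also dominates $Q^{\pi,\gamma}(s_j,\cdot)$. Hence the argmax selects $\bar\mu$ at $s_j$, and because $\eta$ is declared fixed after the switch, the rollout from $s_j$ is generated by $\bar\mu$ alone, giving $V^\eta_\gamma(s_j)=V^{\bar\mu}_\gamma(s_j)$. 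The theorem therefore reduces to a lower bound on $V^{\bar\mu}_\gamma(s_j)-V^\pi_\gamma(s_j)$.

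Next I would perform a horizon-change decomposition,
\begin{equation*}
V^{\bar\mu}_\gamma(s_j) - V^\pi_\gamma(s_j) \;=\; \bigl[V^{\bar\mu}_\gamma(s_j) - V^{\bar\mu}_{\bar\gamma}(s_j)\bigr] \;+\; \bigl[V^{\bar\mu}_{\bar\gamma}(s_j) - V^\pi_\gamma(s_j)\bigr].
\end{equation*}
The second bracket is non-negative by the theorem's assumption. After substituting the series definition, the first bracket equals $\mathbb{E}^{\bar\mu}\bigl[\sum_{t\ge 0}(\gamma^t-\bar\gamma^t)\,r_t\bigm|s_0=s_j\bigr]$, whose deterministic coefficient sum telescopes via a partial-fraction identity to $\sum_{t\ge 0}(\gamma^t-\bar\gamma^t) = \tfrac{1}{1-\gamma}-\tfrac{1}{1-\bar\gamma} = \tfrac{\gamma-\bar\gamma}{(1-\gamma)(1-\bar\gamma)}$, which is positive under $\bar\gamma<\gamma$ and produces exactly the coefficient appearing in the claim.

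The hard part will be delivering the $R_{\max}$ prefactor as a \emph{lower} bound, since the horizon term $\mathbb{E}^{\bar\mu}[\sum_t(\gamma^t-\bar\gamma^t)r_t]$ is naturally \emph{upper}-bounded by $R_{\max}\cdot\tfrac{\gamma-\bar\gamma}{(1-\gamma)(1-\bar\gamma)}$ using $r_t\in[0,R_{\max}]$. My plan here is to couple the hypothesis with the boundedness of rewards: the assumption forces $V^\pi_\gamma(s_j)\le V^{\bar\mu}_{\bar\gamma}(s_j)\le R_{\max}/(1-\bar\gamma)$, while the best-case long-horizon value is $V^{\bar\mu}_\gamma(s_j)\le R_{\max}/(1-\gamma)$, so the gap $V^{\bar\mu}_\gamma(s_j)-V^\pi_\gamma(s_j)$ lies in a window whose two extremes differ by exactly $R_{\max}/(1-\gamma)-R_{\max}/(1-\bar\gamma)=R_{\max}\cdot\tfrac{\gamma-\bar\gamma}{(1-\gamma)(1-\bar\gamma)}$. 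I expect the proof to argue that this window is realized on the favourable end under the stated premise---intuitively, once $\bar\mu$'s short-horizon return already dominates $\pi$'s long-horizon return, $\bar\mu$'s reward trajectory must be close to $R_{\max}$, which lets the horizon-change term saturate its upper limit and contribute the full $R_{\max}$ prefactor. This delicate quantitative link between the assumption and the reward trajectory is where I anticipate the main work.
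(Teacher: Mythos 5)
Your reduction and decomposition are exactly the ones the paper uses: the selection rule picks $\bar\mu$ at $s_j$, fixing $\eta$ after the switch gives $V^\eta_\gamma(s_j)=V^{\bar\mu}_\gamma(s_j)$, and the gap is split as $\bigl[V^{\bar\mu}_\gamma(s_j)-V^{\bar\mu}_{\bar\gamma}(s_j)\bigr]+\bigl[V^{\bar\mu}_{\bar\gamma}(s_j)-V^\pi_\gamma(s_j)\bigr]$ with the second bracket nonnegative by hypothesis. The difficulty you flag at the end, however, is a genuine gap, and your proposed way out does not close it. The hypothesis $V^{\bar\mu}_{\bar\gamma}(s_j)\ge V^\pi_\gamma(s_j)$ places no lower bound on $\bar\mu$'s realized rewards: it is satisfied, for instance, when every reward along both trajectories is zero, so there is no ``saturation'' argument by which the horizon-change term $\mathbb{E}^{\bar\mu}\bigl[\sum_{t\ge 0}(\gamma^t-\bar\gamma^t)r_t\bigr]$ attains its upper envelope. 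The only inequality that follows from $r\ge 0$ and $\bar\gamma<\gamma$ is $V^{\bar\mu}_\gamma(s_j)-V^{\bar\mu}_{\bar\gamma}(s_j)\ge 0$, which yields $V^\eta_\gamma(s_j)-V^\pi_\gamma(s_j)\ge 0$ and nothing stronger.

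You should also know that the paper's own proof does not resolve this either: at the corresponding step it invokes Lemma~\ref{lemma:discount_diff}, which asserts $V^{\bar\mu}_\gamma(s_j)-V^{\bar\mu}_{\bar\gamma}(s_j)\le \frac{\gamma-\bar\gamma}{(1-\gamma)(1-\bar\gamma)}R_{\text{max}}$, and then writes that inequality in the reversed direction as though it were a lower bound. The all-zero-reward scenario above is a counterexample to the theorem as displayed (whenever $R_{\text{max}}>0$), so the strict bound cannot be proved by any route; the honest conclusion from your (and the paper's) decomposition is the weaker statement $V^\eta_\gamma(s_j)\ge V^\pi_\gamma(s_j)$. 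Your instinct that the hard part would be delivering the $R_{\text{max}}$ prefactor as a \emph{lower} bound was exactly right --- that part cannot be delivered.
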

Proof is in Appendix~\ref{app:theory:performance},~Theorem~\ref{theorem_app:value_bound}. The result above reveals that, by using the prior policy $\bar{\mu}$ for the remaining interactions~(\textit{i.e.},~$t>j$) via the value guidance, the behavior policy enjoys a performance guarantee compared with the performance induced by simply using task policy $\pi$. Based on the result, we further provide the performance guarantee on the behavior policy in the case of a single switched sub-trajectory.

\begin{theorem}
    \label{thm:perf_bound}
    When there is only one sub-trajectory from $kt$ to $(k+1)h$ during which a prior policy $\bar{\mu}$ is selected, which means no prior policy $\mu \in \{\mu_i\}_{i=1}^K$ satisfies $V_{\bar{\gamma}}^\mu(s_t) \geq V_\gamma^\pi(s_t)$ except $t \in [kh, (k+1)h)$. The performance difference between the behavior policy $\eta$ induced by Eq.~(\ref{eq:q_guided_behavior_policy}) and the task policy $\pi$ is bounded as follows:
    $$
    J_\gamma(\eta) - J_\gamma(\pi) \geq \gamma^{kh} \dfrac{\gamma-\bar{\gamma}}{(1-\gamma)(1-\bar{\gamma})}R_{\text{max}} - \gamma^{(k+1)h} \dfrac{R_{\text{max}}}{(1-\gamma)^2}\|\bar{\mu}-\pi\|_\infty,
    $$
    where $\|\bar{\mu}-\pi\|_\infty := \underset{s\in S}{\sup}\underset{A}{\sum}|\bar{\mu}(a|s) - \pi(a|s)|$, and $\bar{\mu} = \arg\max_{\mu\in\{\mu_i\}_{i=1}^K} V^\mu_{\bar{\gamma}} (s_{kh}),~\forall s_{kh}\in \{s\in S | \mathbb{P}_{kh}^\pi(s) > 0\}$.
\end{theorem}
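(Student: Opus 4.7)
}
Since by assumption the behavior policy $\eta$ coincides with the task policy $\pi$ for every $t<kh$, the two discounted returns agree on the first $kh$ steps and the state distribution at time $kh$ is identical under both, namely $\mathbb{P}_{kh}^\pi$. Consequently I would first write
\begin{equation*}
    J_\gamma(\eta) - J_\gamma(\pi)
    \;=\;\gamma^{kh}\,\mathbb{E}_{s\sim\mathbb{P}_{kh}^\pi}\!\Bigl[V_\gamma^{\eta'}(s)-V_\gamma^\pi(s)\Bigr],
\end{equation*}
where $\eta'$ is the ``run $\bar\mu$ for $h$ steps, then $\pi$ forever'' policy that governs $\eta$ from time $kh$ onward (by the single-switch assumption). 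This reduces the problem to a pointwise lower bound on $V_\gamma^{\eta'}(s)-V_\gamma^\pi(s)$.

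The key decomposition I would use is to insert the hypothetical policy $\bar\mu$-forever and telescope:
\begin{equation*}
    V_\gamma^{\eta'}(s)-V_\gamma^\pi(s)
    \;=\;\bigl[V_\gamma^{\bar\mu}(s)-V_\gamma^\pi(s)\bigr]
       \;-\;\bigl[V_\gamma^{\bar\mu}(s)-V_\gamma^{\eta'}(s)\bigr].
\end{equation*}
For the first bracket I would invoke Theorem~\ref{theorem:value_bound} directly, using the hypothesis $V_{\bar\gamma}^{\bar\mu}(s_{kh})\geq V_\gamma^\pi(s_{kh})$ at the switch point $kh$, giving the lower bound $\frac{\gamma-\bar\gamma}{(1-\gamma)(1-\bar\gamma)}R_{\max}$. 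For the second bracket I would unroll the $h$-step Bellman expansion of both value functions under $\bar\mu$: they share the same first $h$ rewards and differ only in the bootstrapped tail, which yields
\begin{equation*}
    V_\gamma^{\bar\mu}(s)-V_\gamma^{\eta'}(s)
    \;=\;\gamma^{h}\,\mathbb{E}_{\bar\mu}\!\left[\,V_\gamma^{\bar\mu}(s_h)-V_\gamma^\pi(s_h)\,\big|\,s_0=s\right]
    \;\leq\;\gamma^{h}\,\bigl\|V_\gamma^{\bar\mu}-V_\gamma^\pi\bigr\|_\infty.
\end{equation*}

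The main technical step is therefore the policy-perturbation bound $\|V_\gamma^{\bar\mu}-V_\gamma^\pi\|_\infty\leq \frac{R_{\max}}{(1-\gamma)^2}\|\bar\mu-\pi\|_\infty$, which I would derive via the standard Bellman-operator argument: split $V^{\bar\mu}-V^\pi = (T^{\bar\mu}V^{\bar\mu}-T^{\bar\mu}V^\pi)+(T^{\bar\mu}V^\pi-T^\pi V^\pi)$, where the first summand contracts in sup norm by $\gamma$ and the second is bounded pointwise by $\|\bar\mu(\cdot|s)-\pi(\cdot|s)\|_1\cdot\|Q^\pi\|_\infty\leq\|\bar\mu-\pi\|_\infty\cdot\tfrac{R_{\max}}{1-\gamma}$; rearranging gives the claimed constant. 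Combining everything and multiplying by the $\gamma^{kh}$ prefactor produces the stated bound, with the negative term carrying the factor $\gamma^{kh}\!\cdot\!\gamma^{h}=\gamma^{(k+1)h}$.

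The main obstacle I anticipate is getting the constants tight in the perturbation lemma: a crude application of the performance difference lemma with total-variation distance would give an extra $\gamma$ factor, whereas the theorem's bound $R_{\max}/(1-\gamma)^2$ requires the sharper Bellman-operator argument sketched above. The rest is routine linearity of expectation and the fact that both $\eta$ and $\pi$ share the state distribution at the pre-switch boundary.
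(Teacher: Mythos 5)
Your proof is correct, and its skeleton is the same as the paper's: both arguments cancel the identical pre-switch segment, reduce to the state distribution $\mathbb{P}_{kh}^{\pi}$ at the switch point, lower-bound the gain term $\mathbb{E}_{s\sim\mathbb{P}_{kh}^{\pi}}[V_\gamma^{\bar\mu}(s)-V_\gamma^{\pi}(s)]$ by invoking Theorem~\ref{theorem:value_bound}, and then pay a penalty of $\gamma^{(k+1)h}$ times a bound on $|V_\gamma^{\bar\mu}-V_\gamma^{\pi}|$ at the post-switch states (your term $V_\gamma^{\bar\mu}(s)-V_\gamma^{\eta'}(s)=\gamma^{h}\,\mathbb{E}_{\bar\mu}[V_\gamma^{\bar\mu}(s_h)-V_\gamma^{\pi}(s_h)]$ is exactly the paper's term $(a3)$ with the sign rearranged). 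The one place where you genuinely diverge is the auxiliary perturbation bound. The paper proves $|V_\gamma^{\bar\mu}(\tilde s)-V_\gamma^{\pi}(\tilde s)|\leq \frac{2R_{\max}}{(1-\gamma)^2}\,\mathbb{E}_{d^{\bar\mu}(\cdot|s_0=\tilde s)}\left[D_{TV}[\bar\mu\|\pi]\right]$ via the discounted state-visitation representation of the value function together with the state-distribution difference lemma of Achiam et al.\ (Lemmas~\ref{lemma:state_dis_diff} and~\ref{lemma:value_diff}), and only then coarsens the expectation over $d^{\bar\mu}$ to the sup-norm $\|\bar\mu-\pi\|_\infty$. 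You instead get the sup-norm bound directly from the Bellman-operator splitting $V^{\bar\mu}-V^{\pi}=(T^{\bar\mu}V^{\bar\mu}-T^{\bar\mu}V^{\pi})+(T^{\bar\mu}V^{\pi}-T^{\pi}V^{\pi})$, contraction, and $\|Q^{\pi}\|_\infty\leq R_{\max}/(1-\gamma)$; this is more elementary, avoids the visitation-distribution machinery entirely, and lands on the identical constant $R_{\max}/(1-\gamma)^2$. The paper's route retains a pointwise, distribution-weighted intermediate bound that is strictly sharper before the final sup-norm relaxation, but since the theorem statement only uses $\|\bar\mu-\pi\|_\infty$, nothing is lost by your shortcut.
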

Proof is in Appendix~\ref{app:theory:performance},~Theorem~\ref{thm_app:perf_bound}. The result above implies that the performance of the behavior policy, inducing a single sub-trajectory during which the prior policy $\bar{\mu}$ takes control, is guaranteed to be not very different from the performance of task policy $\pi$. While the policy selection dynamic in practice can be more complicated than a single switch sub-trajectory, our theoretical results imply that value-guided behavior planning enjoys rigorous performance guarantees in non-trivial cases.

\section{Related Work}
\label{sec:rw}

\textbf{Policy reuse.}~
Previous works have examined single-task policy reuse~\cite{laroche2017multi,gimelfarb2018reinforcement,kurenkov2020ac,cheng2020policy,uchendu2022jump,agarwal2022reincarnating,zhangpolicy} and cross-task policy reuse~\cite{fernandez2006probabilistic,rosman2016bayesian,li2018optimal,li2019context,yang2020efficient,kurenkov2020ac,gimelfarb2021contextual,vezzani2022skills,zhangcup,campos2021beyond,zhang2023efficient}. Given a collection of prior policies without knowing their properties, the approaches designed for cross-task policy reuse can generalize to the single-task setting, and we focus on the cross-task setting in this work. Existing approaches can be categorized into three classes: advantage-based methods~\cite{cheng2020policy,zhangcup}, aggregation-based methods~\cite{barekatain2021multipolar,gimelfarb2021contextual}, and behavior-based methods~\cite{fernandez2006probabilistic,rosman2016bayesian,li2018optimal,li2019context,yang2020efficient,vezzani2022skills,campos2021beyond,zhang2023efficient,kurenkov2020ac}. The advantage-based methods perform policy regularization with the superior actions advised by all prior policies~\cite{cheng2020policy,zhangcup}. Aggregation-based methods attempt to compose all actions via certain aggregation functions and to learn the task policy by optimizing the aggregation functions~\cite{barekatain2021multipolar,gimelfarb2021contextual}. In contrast, behavior-based methods directly deploy the prior policies for guided online interactions~\cite{fernandez2006probabilistic,rosman2016bayesian,li2018optimal,li2019context,yang2020efficient,kurenkov2020ac,vezzani2022skills,campos2021beyond,zhang2023efficient}, which simultaneously exploits the advantageous actions and is computationally efficient given abundant prior policies. Some behavior-based methods select the policies via the simple heuristics~(\textit{e.g.}~$\epsilon$-greedy)~\cite{fernandez2006probabilistic,li2018optimal}, which lacks a principled mechanism to evaluate the policies for subsequent interactions. To properly guide the policy selection, several works formulate the problem with hierarchical control~\cite{li2019context,yang2020efficient,vezzani2022skills}, which results in nonstationary training and requires extensive rollouts of each low-level policy. Furthermore, the value function is adopted for policy selection in several works~\cite{campos2021beyond,zhang2023efficient,kurenkov2020ac}. Though effective, the estimated value function can only provide biased behavior evaluation since the value function is typically trained concerning the task policy. Our method falls into the behavior-based method category and utilizes value functions to deploy the short-term behaviors of prior policies. In contrast to existing works, we propose a hybrid value architecture to perform decomposed evaluations with different horizons, maintaining consistency between the behavior evaluation and the behavior deployment.

\textbf{Composing low-level primitives or skills.}~ Many works have investigated composing low-level policies that can be unsupervised learned skills~\cite{eysenbach2018diversity,merel2018neural,strouse2022learning,ajay2021opal,gregor2016variational,Sharma2020Dynamics-Aware,yang2023behavior} or preexisting action primitives~\cite{Qureshi2020Composing,dalal2021accelerating,peng2019mcp}. The essential difference between these works and those in policy reuse is the assumption on the prior policies or the skills. Concretely, the policy reuse setting does not assume specific properties and accessible knowledge of the prior policies, while they can only be queried like multiple black-box functions. In contrast, the skills are typically discovered by maximizing the state coverage~\cite{eysenbach2018diversity,strouse2022learning} or controllability~\cite{gregor2016variational,Sharma2020Dynamics-Aware} over the environment. Another line of work learns the skills by extracting semantic behaviors from substantial demonstrations or offline datasets~\cite{merel2018neural,ajay2021opal}. Furthermore, primitive-based works typically assume universal behavior abstractions that can be composed into a wide range of behaviors~\cite{Qureshi2020Composing,dalal2021accelerating}. Therefore, our work complements existing skill-composing frameworks and can be plugged into any skill discovery methods for downstream adaptation. For example, the skills learned via various skill discovery methods can be regarded as individual prior policies and be reused via our algorithm in the downstream tasks for efficient learning. 

\textbf{Value estimation with different horizons.}~The discount factor that exponentially reduces the present value of future rewards~\cite{bellman1957markovian,sutton2018reinforcement} establishes a time preference for rewards realized sooner or later. Previous works have investigated the role of a lower discount factor on the approximation error bound~\cite{petrik2008biasing}, model accuracy~\cite{jiang2015dependence}, regularization~\cite{amit2020discount,chen2018improving}, and the pessimistic effect in offline setting~\cite{hu2022role}. Furthermore, empirical improvement has been observed in previous methods that incorporate multiple horizons~\cite{reinke2017average,romoff2019separating,sherstan2020gamma,tang2021taylor,kalweit2019composite} or an adaptive horizon~\cite{xu2018meta} for optimization. Orthogonal to these works, we propose to evaluate the behaviors with different horizons across multiple policies for policy reuse. 

\textbf{Hybrid architectures of value functions.}~
Existing works have explored different value functions rather than estimating the expected value with a classical architecture~\cite{watkins1992q,rummery1994line}. Many works have investigated the distributional value functions that estimate the full distribution of the returns~\cite{bellemare2017distributional,dabney2018distributional,dabney2018implicit,bai2022monotonic}. The universal value function approximator~(UVFA)~\cite{schaul2015universal} has been proposed to model the goal-directed knowledge~\cite{andrychowicz2017hindsight,nasiriany2019planning}. Furthermore, hybrid value functions have been proposed for separate evaluation over decomposed reward functions~\cite{van2017hybrid,lin2020rd,zhang2021distributional} or different time-scales~\cite{sherstan2020gamma}. In contrast to these works, we propose a novel hybrid value function that evaluates different policies. The works manipulating successor features~\cite{barreto2018transfer} evaluate multiple policies as well, but they assume the value functions of different policies are linearly combinable over the shared feature~\cite{barreto2018transfer,borsa2018universal}.

\section{Experiments}
\label{sec:exp}

In this section, we empirically evaluate SMEC to answer the following questions: (1)~Can SMEC improve training efficiency given a collection of prior policies obtained from different tasks?~(Figure~\ref{fig:curves_aggregated}) (2)~How do the algorithm designs and hyper-parameters affect the performance?~(Figure~\ref{fig:effect_factor_multiple},~\ref{fig:effect_hyper_multiple}) (3)~Can SMEC identify the prior policies and fully exploit the beneficial ones that are related to the current task?~(Figure~\ref{fig:prior_utilization},~\ref{fig:more_priors}) (4)~Can SMEC select the prior policies properly at the early training stages and automatically wean off the prior policies as the task policy improves?~(Figure~\ref{fig:policy_switch_dynamic_metaworld_pickplacewall}) 

\subsection{Setups}
\label{sec:exp:setup}

\textbf{Training environment settings.}~To rigorously validate the effect of our method, we use a manipulation benchmark~(\textit{i.e.},~MetaWorld~\cite{yu2020meta}) and a locomotion environment~(\textit{i.e.},~AntMaze~\cite{fu2020d4rl}) that can provide diverse task instances for the evaluation. We use 3 prior policies~(learned policies in \textit{Push}, \textit{Reach}, and \textit{PickPlace}) for Meta-World and 4 prior policies~(learned policies for reaching four goals in the empty maze) for AntMaze. For the downstream tasks, we choose 12 tasks in MetaWorld different from those of prior policies and 3 tasks in AntMaze with more complex maze layouts. Details of the environments are shown in Appendix~\ref{app:setting:env}, including the performance of the prior policies on each downstream task.

\textbf{Implementation and baselines.}~We use SAC~\cite{haarnoja2018soft} as our backbone algorithm. The effective horizon of the short-term behaviors $h$ is set to one-tenth of the episode length $H$~(\textit{i.e.},~$h:=H/10$) across all environments. We compare our method with the following baselines: (\romannumeral1)~\textit{Scratch}: training SAC from scratch without access to any prior policy; (\romannumeral2)~\textit{AC-Teach}~\cite{kurenkov2020ac}: a bayesian method leveraging behaviors of prior policies; (\romannumeral3)~\textit{CUP}~\cite{zhangcup}: an advantage-based method performing policy regularization with actions advised by prior policies; (\romannumeral4)~\textit{MultiPolar}~\cite{barekatain2021multipolar}: an aggregation-based method composing actions via the auxiliary network; (\romannumeral5)~\textit{MAMBA}~\cite{cheng2020policy}: a method performing policy improvement with a baseline function aggregated over all value functions; (\romannumeral6)~\textit{SkillS}~\cite{vezzani2022skills}: a hierarchy-based method performing policy sequencing for temporally-extended exploration; (\romannumeral7)~\textit{QMP}~\cite{zhang2023efficient}: a behavior sharing method exploiting actions from prior policies. The details of the implementation and baselines are in Appendix~\ref{app:setting:alg}.

\begin{figure}[t]
    \centering
    \includegraphics[width=\textwidth]{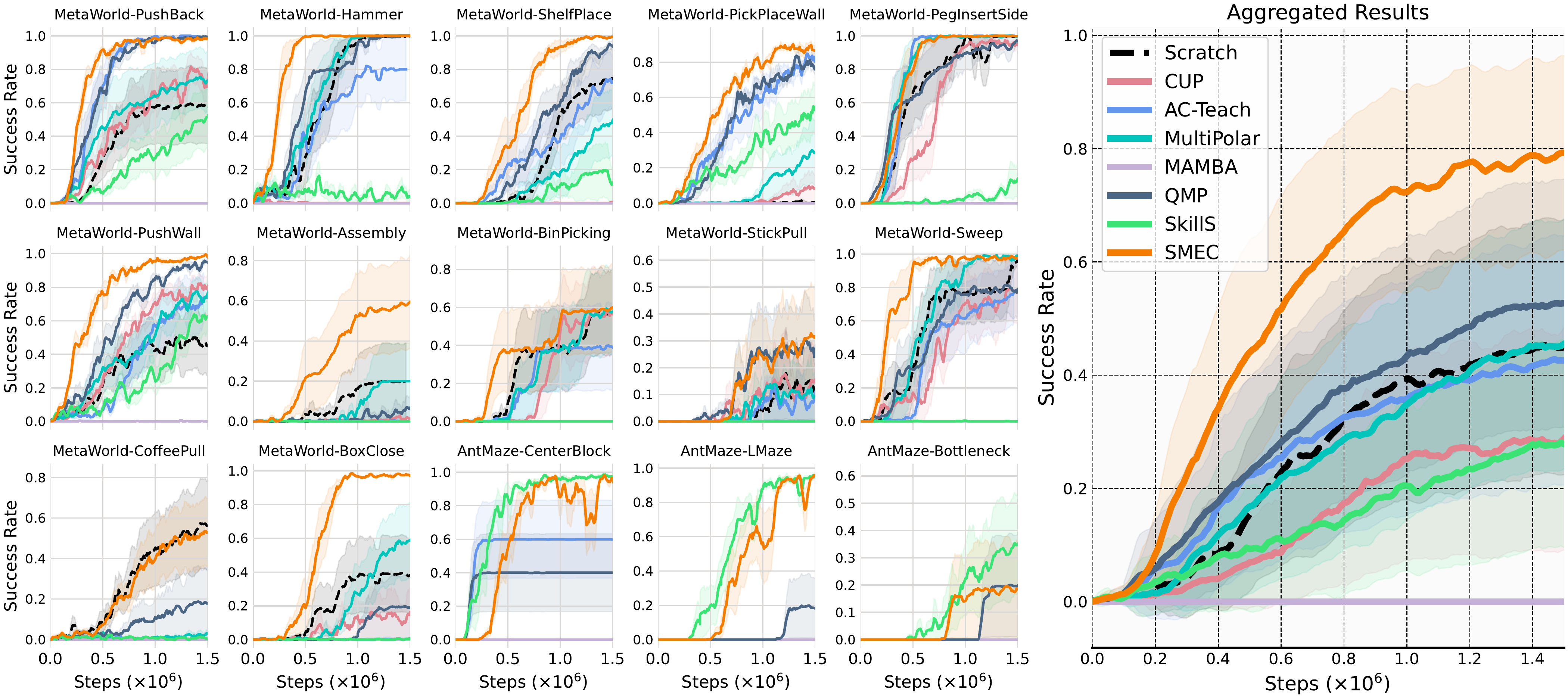}
    \caption{(\textit{Left})~The learning curves on the success rate across all tasks, including $12$ MetaWorld tasks and $3$ AntMaze tasks. The solid line and shaded regions represent the mean and standard deviation across five runs with different random seeds. (\textit{Right})~The aggregated curves over all 15 tasks.}
    \label{fig:curves_aggregated}
    \vspace{-1em}
\end{figure}

\subsection{Sample-efficient Training by Reusing Prior Policies}
\label{sec:exp:basic_results}
To examine the performance of our method in exploiting the prior policies for sample-efficient training, we compare our method with the baselines given the same prior policies mentioned in Section~\ref{sec:exp:setup}. The results in Figure~\ref{fig:curves_aggregated} demonstrate that SMEC consistently improves the performance of \textit{Scratch} across all environments, which validates the ability of SMEC to exploit prior policies. We remark that the prior policies can be useless in the downstream task, in which case improperly using the prior policies can hinder the training of task policy, resulting in inefficient learning. However, SMEC outperforms the baselines in all environments, which validates the effectiveness of utilizing short-term behaviors guided by the decomposed value estimations.

Specifically, we observe that the advantage-based methods~(\textit{CUP},~\textit{MAMBA}) consistently underperform the other algorithms in most tasks, which results from the insufficient exploration induced by the policy regularization in the early training stage. The aggregation-based method~(\textit{MultiPolar}) underperforms \textit{Scratch} in several tasks, as learning to aggregate irrelevant actions from prior policies can hinder the training. Furthermore, prior value-guided behavior-based methods~(\textit{AC-Teach},~\textit{QMP}) are less efficient than our method, which can result from the inconsistency between the evaluation of the prior policies and the deployed behaviors. The hierarchical method~(\textit{SkillS}) is inefficient in all MetaWorld tasks while performing well in the AntMaze tasks. Since the near-optimal trajectories in the AntMaze tasks can be obtained by composing the prior policies, \textit{SkillS} can efficiently learn the optimal high-level controller. In contrast, the prior policies in MetaWorld are nearly useless in several tasks, and the abundant deployments of prior policies will hinder the training. SMEC is the only method that outperforms \textit{Scratch} across all environments. Benefiting from value-guided behavior planning, the prior policies would only be deployed if the short-term behaviors of the prior policy are better than those of the task policy, which leads to the cautious deployment of the prior policies.

\subsection{Ablation Studies}
\label{sec:exp:ablation}
In this subsection, we conduct ablation experiments to analyze the effect of several factors on the performance and the sensitivity to the hyper-parameters. For each set of experiments, we report the aggregated results on multiple tasks with 5 different runs for each task. The detailed results on each task are deferred to Appendix~\ref{app:addition:ablation}, and additional results are deferred to Appendix~\ref{app:addition} and \ref{app:continual}.

\textbf{Evaluation of prior policies.}~
To evaluate the effect of the disentangled evaluation for prior policies proposed in Section~\ref{sec:method:evaluation}, we introduce a variant that performs policy selection via single behavior value function $Q_\theta$~(\textit{i.e.},~$\eta(\cdot|s_t):=\arg\max_{\nu\in\{\pi\}\cup\{\mu_i\}_{i=1}^K}\mathbb{E}_{a\sim\nu(\cdot|s_{\lfloor\frac{t}{h}\rfloor\cdot h})}[Q_\theta(s_{\lfloor\frac{t}{h}\rfloor\cdot h}, a)]~(\cdot|s_t)$) and temporally-extended exploration same as SMEC~(\textit{i.e.}, select a policy every $h$ step). The value function of the variant is trained with the task policy value targets the same as standard off-policy algorithms. We compare the performance of the original algorithm with the variant on multiple environments, and the aggregated results are shown in Figure~\ref{fig:effect_factor_multiple}~(Left). The results indicate that the decoupled evaluation of prior policies helps with performance, which can result from improved policy selection based on the accurate evaluation of the prior policy behaviors.

\begin{figure}[t]
    \centering
    \begin{minipage}[h]{.58\textwidth}
        \centering
        \includegraphics[width=\textwidth]{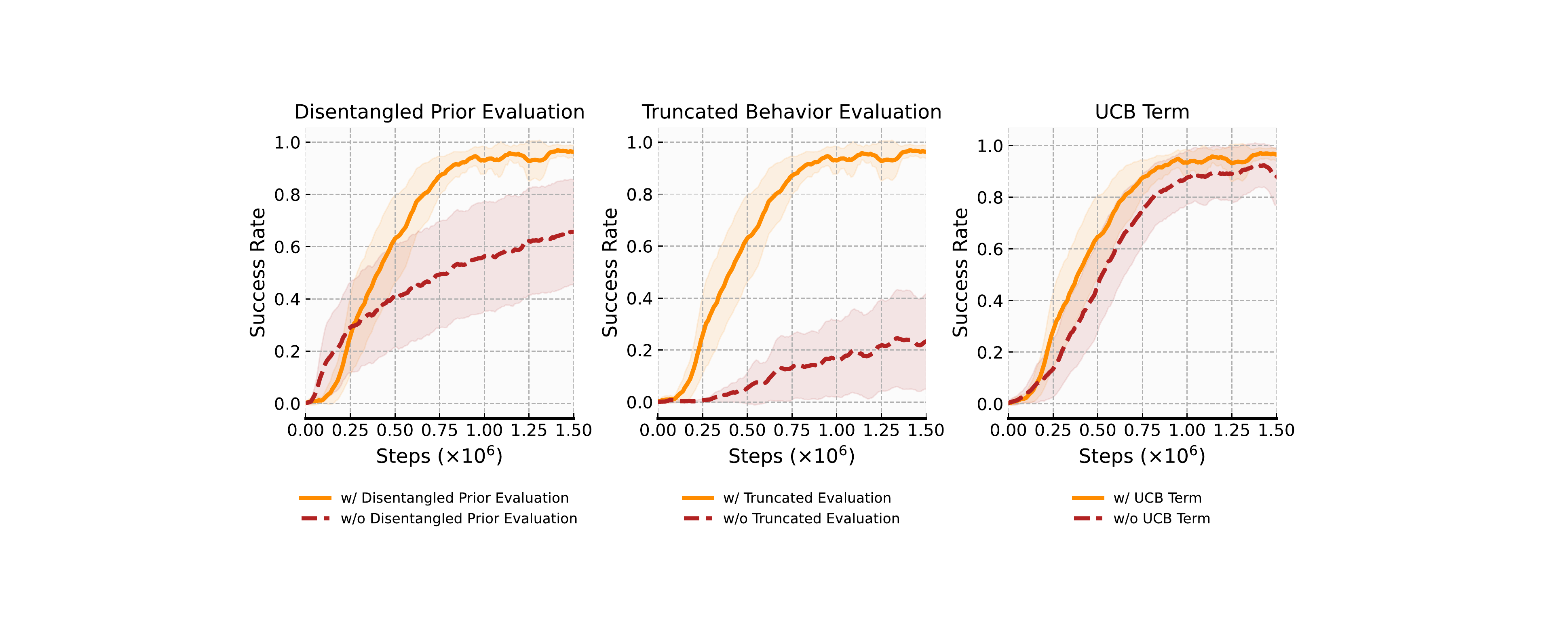}
        \captionof{figure}{Ablation results on algorithm factors. (\textit{Left})~Effect of disentangled evaluation of prior policies. (\textit{Middle})~Effect of truncated behavior evaluation. (\textit{Right})~Effect of UCB policy selection term.}
        \label{fig:effect_factor_multiple}
    \end{minipage}
    \hfill
    \begin{minipage}[h]{.395\textwidth}
        \centering
        \includegraphics[width=\textwidth]{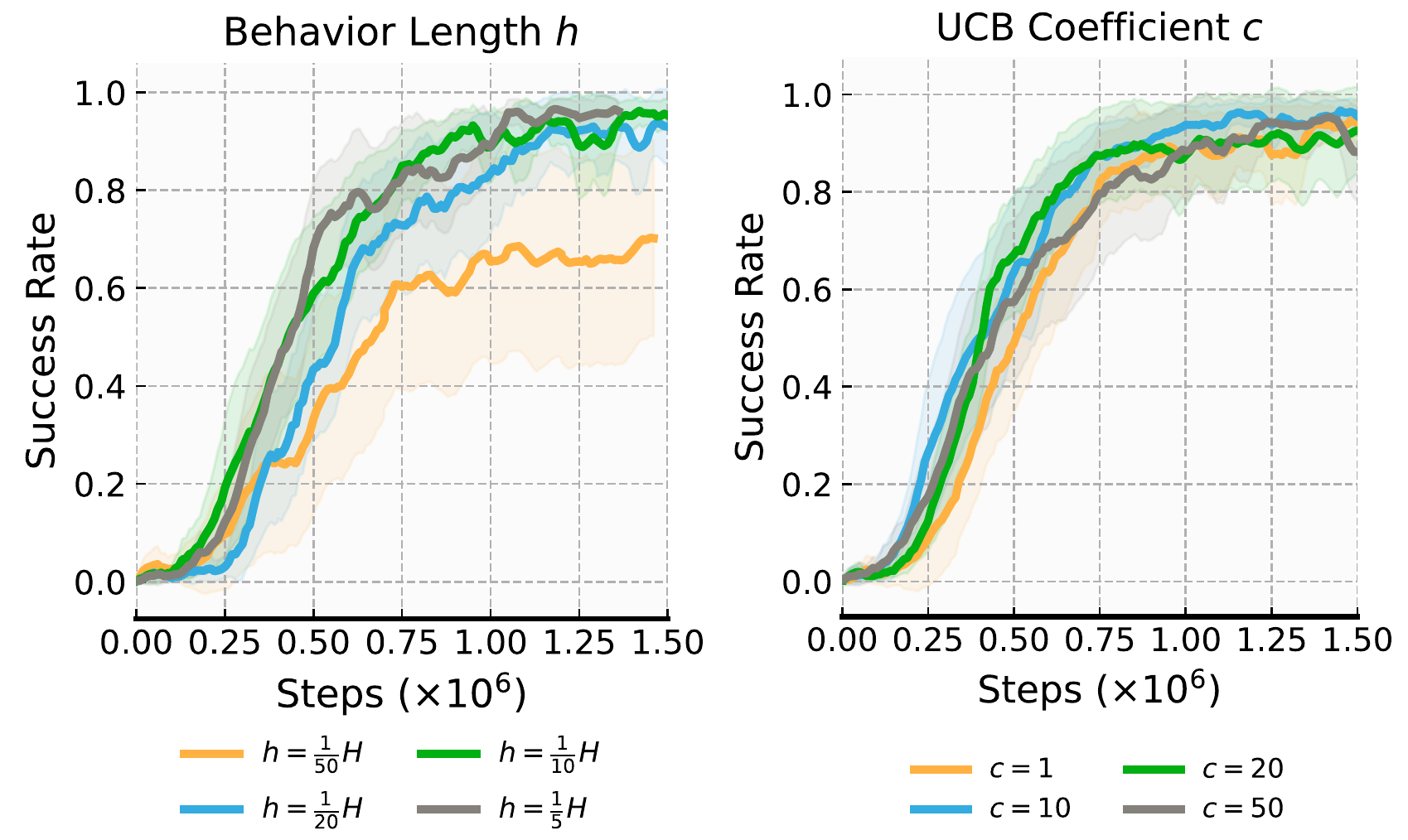}
        \captionof{figure}{Ablation results on hyper-parameters. (\textit{Left})~Sensitivity to the exploration length $h$. (\textit{Right})~Sensitivity to the UCB coefficient $c$.}
        \label{fig:effect_hyper_multiple}
    \end{minipage}
    \vspace{-1em}
\end{figure}

\textbf{Short horizon evaluation.}~
We proposed to validate the effect of truncated behavior evaluation of the prior policies via the lower discount factor $\bar{\gamma}$. To achieve this, we propose a variant that trains the value function of prior policies with the same horizon as the task policy~(\textit{i.e.},~$\bar{\gamma}=\gamma$). The aggregated results on multiple tasks are shown in Figure~\ref{fig:effect_factor_multiple}~(Middle), which validate the importance of the short-horizon evaluation. Interestingly, the performance degradation induced by removing truncated behavior evaluation is more significant than that induced by removing disentangled evaluation. We defer further discussion on this phenomenon to Appendix~\ref{app:addition:discussion_normal_gamma}.

\textbf{UCB-term.}~
We perform ablation experiments on the UCB component presented in Section~\ref{sec:method:exploit}. We compare the original method with the variant performing simply value-guided policy selection in Eq.~(\ref{eq:q_guided_behavior_policy}). The results are shown in Figure~\ref{fig:effect_factor_multiple}~(Right), which validates the incremental performance improvement by inclusion UCB term. However, by referring to the detailed results in Appendix~\ref{app:addition:ablation}, we observe that the UCB term is essential for significant performance improvement in a few tasks. Furthermore,
we compare variants with different coefficient values to examine the impact of UCB coefficient~(\textit{i.e.},~$c$). The results in Figure~\ref{fig:effect_hyper_multiple}~(Right) demonstrate that different coefficient values do not significantly influence our method. Since the Q value of the task policy increase as the training proceeds, UCB-based policy selection would gradually prefer the task policy. Thus, different coefficients only influence the training dynamic at the very early training stage.

\textbf{Sensitivity to behavior length $h$.}~
We further investigate the role of the behavior length $h$ on the algorithm performance. The results shown in Figure~\ref{fig:effect_hyper_multiple}~(Left) demonstrate that the performance of our method only degrades when the behavior length is overly short~(\textit{e.g.},~$h=\frac{1}{50}H$), which can result from the insufficient time budget for effective temporally-extended behaviors.



\subsection{Identification of Beneficial Prior Policies}
\label{sec:exp:identify}
Intuitively, the utilization of the prior policies should align with the effectiveness of the prior policies in the current task. Thus, we aim to validate whether SMEC can identify the beneficial prior policies and maximally exploit them in this subsection. 

\begin{figure}[h]
    \vspace{-0.5em}
    \centering
    \begin{minipage}[t]{.44\textwidth}
        \centering
        \includegraphics[width=\textwidth]{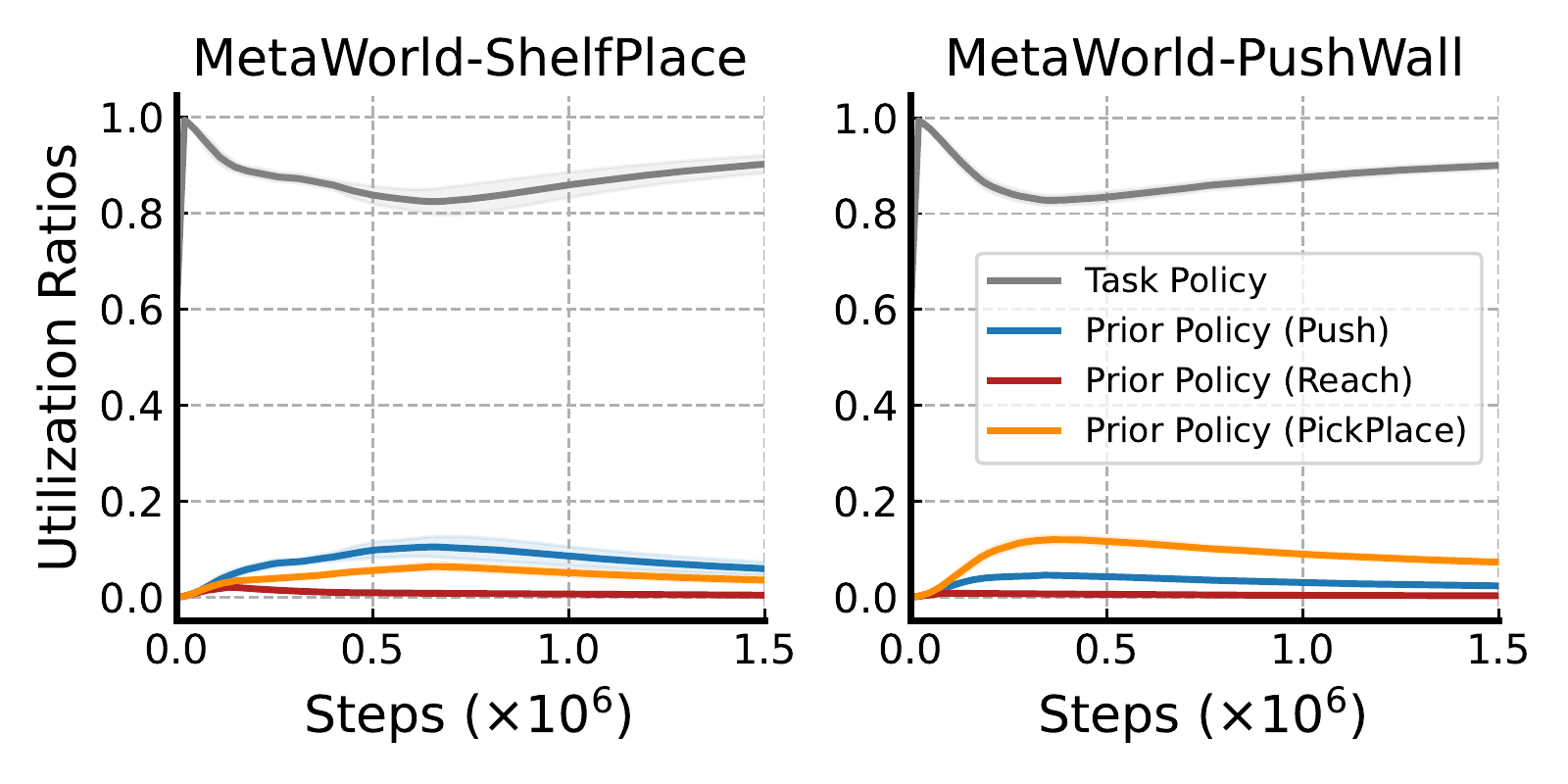}
        \captionof{figure}{Utilization ratios of all policies in two MetaWorld tasks.}
        \label{fig:prior_utilization}
    \end{minipage}
    \hfill
    \begin{minipage}[t]{.52\textwidth}
        \centering
        \includegraphics[width=\textwidth]{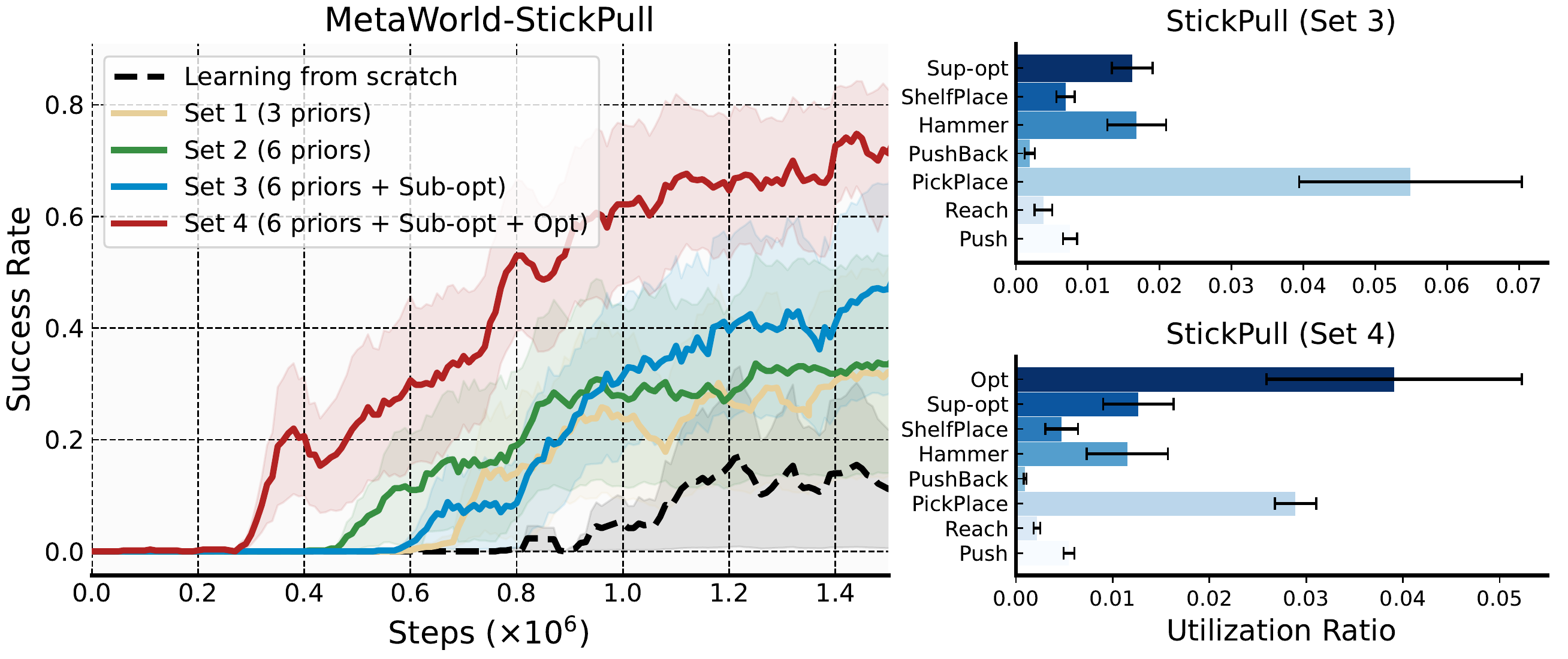}
        \captionof{figure}{(\textit{Left})~Performances under different prior policies. (\textit{Right})~The utilization ratios of prior policies in different settings.}
    \label{fig:more_priors}
    \end{minipage}
    \vspace{-0.5em}
\end{figure}

\textbf{Utilization of prior policies.}~We first analyze the utilization of the prior policies induced by SMEC. Based on the experiments in Section~\ref{sec:exp:basic_results}, Figure~\ref{fig:prior_utilization} shows the percentages of each policy selected throughout training on ShelfPlace and PushWall of MetaWorld. In ShelfPlace, the Push policy is chosen more frequently than the other two prior policies. In PushWall, the PickPlace policy is the most selected prior policy, which seems to contradict the task-policy correlation. However, as demonstrated in Figure~\ref{fig:zero-shot_performance_mix} of Appendix~\ref{app:setting:env}, the PickPlace policy is even more practical than the Push policy in the PushWall task. Thus, SMEC can efficiently identify the effectiveness of the prior policies. At the early training stage, the utilization of prior policies increases as the value estimation on prior policies gradually becomes accurate. As the training proceeds, the task policy becomes better, leading to an increase in its utilization. Complete results of the prior policy utilization across all tasks are shown in Appendix~\ref{app:addition:full_utilization}.

\textbf{Learning with increasing prior policies.}~
We further conduct experiments with different sets of prior policies. On the task StickPull, we use four different prior policy sets: Set 1~(Reach, Push, PickPlace policy), Set 2~(additional PushBack, Hammer, ShelfPlace policy), Set 3~(additional task-specific sub-optimal policy), Set 4~(additional task-specific optimal policy). The results in Figure~\ref{fig:more_priors} show that the performance of SMEC improves as the prior policies increase, especially in the cases task-specific policies exist in the prior policy set. By investigating the utilization ratios of prior policies shown in Figure~\ref{fig:more_priors}~(Right), we observe that the task-specific optimal policy is the most selected one given the Set 4 prior policies, which validates that SMEC can identify the related policies and maximally exploit them. Extra results with different prior policy sets on CoffeePull are deferred to Appendix~\ref{app:addition:more_priors}.

\subsection{Qualitative Analysis}
\label{sec:exp:qualitative}

In this section, we provide visualization results on the dynamic of policy selection induced by SMEC, hoping to verify the abilities to exploit the prior policies and to wean off the prior policies as the performance of the task policy improves. 
The visualization of the policy switch dynamic throughout training shown in Figure~\ref{fig:policy_switch_dynamic_metaworld_pickplacewall} demonstrates that our method can exploit the related prior policies at the early training stage and the prior policies are gradually weaned off as the performance of task policy improves. 

\begin{figure}[h]
    \centering
    \vspace{-0.25em}
    \includegraphics[width=0.925\textwidth]{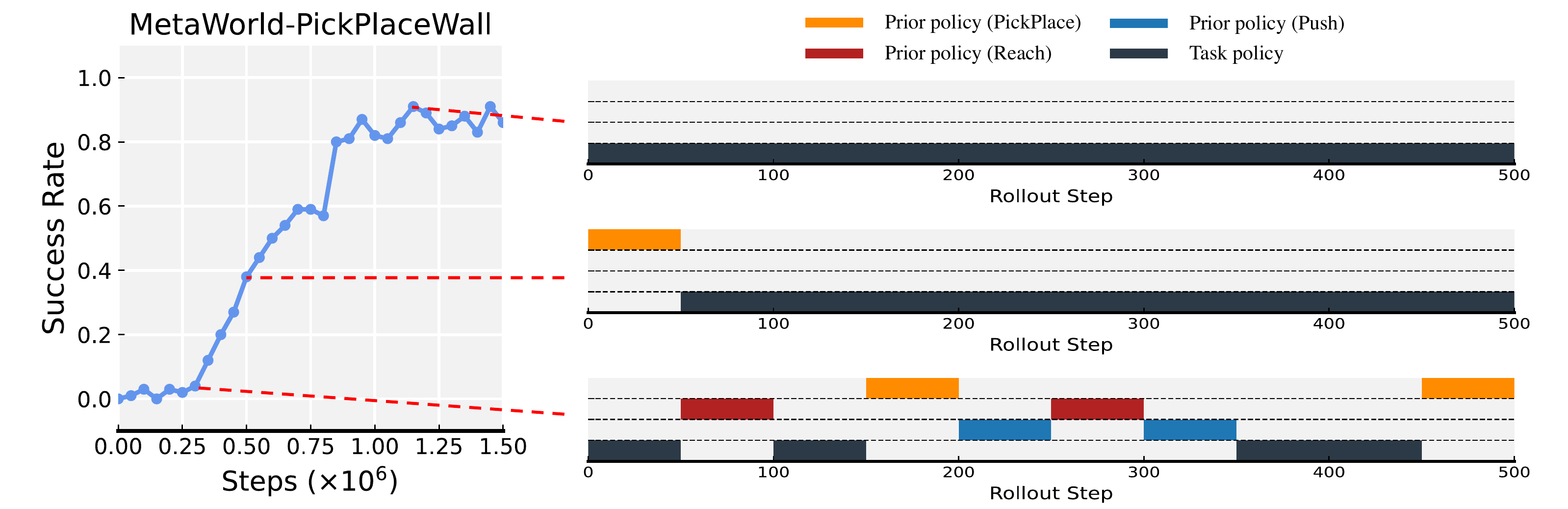}
    \caption{(\textit{Left})~The performance of the task policy throughout training.  (\textit{Right})~The policy switch dynamic within an episode at different training stages. The shaded areas indicate the time when the corresponding policy takes control.}
    \label{fig:policy_switch_dynamic_metaworld_pickplacewall}
    \vspace{-1em}
\end{figure}

\section{Conclusion}
\label{sec:conclusion}

This paper introduces Selective Myopic bEhavior Control~(SMEC), a simple yet effective approach for policy reuse. We start with the insight that the short-term behaviors of prior policies can be sharable for effective policy reuse. To achieve this, we propose to evaluate the short-term behaviors of the prior policies and perform behavior planning based on the value estimations across all policies. Based on the proposed hybrid value architecture, SMEC can efficiently scale to many prior policies. Theoretically, we analyze the performance of the behavior policy induced by SMEC and demonstrate that the performance is guaranteed via the proposed value-guided behavior planning. Empirically, we show that our method outperforms various baseline methods across manipulation and locomotion domains. We validate the abilities of SMEC to identify the relevant prior policies and to automatically wean off the prior policies as the performance of task policy improves. 

\textbf{Limitation and future directions.}
Though the proposed method demonstrates advanced performance, the learning efficiency on novel tasks is limited by the utility of the prior policies. To solve the challenging tasks with ineffective prior policies, the skill discovery schemes~\cite{eysenbach2018diversity,ajay2021opal} that learn multiple skills in online~\cite{eysenbach2018diversity} or offline~\cite{ajay2021opal} setting can be integrated to pre-train diverse prior policies. Furthermore, existing policy reuse mainly focus on enhancing the learning efficiency in downstream tasks. It will be interesting to extend the policy reuse for safe exploration~\cite{bharadhwaj2021conservative} or risk-sensitive decisions~\cite{chow2015risk} by utilizing the relevant behaviors of the prior policies.


\bibliographystyle{plain}

\bibliography{ref}

\clearpage

\appendix

\section{Algorithm Description}
\label{app:pseudocode}
The pseudocode of our algorithm is presented in Algorithm~\ref{alg:pseudocode}. During the online interactions, we perform value-guided behavior planning by periodically switching the behavior policy according to Eq.~(\ref{eq:q_ucb_guided_behavior_policy}). The $\eta$ in Algorithm~\ref{alg:pseudocode} denotes the selected behavior policy and is chosen from the set of policies $\{\pi\}\cup\{\mu_i\}_{i=1}^K$, rather than an individual policy module. To evaluate the behaviors of all policies, we perform temporal difference learning to optimize the value functions. The task policy $\pi$ is optimized via Soft-Actor Critic algorithm~\cite{haarnoja2018soft} based on the value function $Q_\theta^{\pi,\gamma}$.

\begin{algorithm}[tbh]
    \setstretch{1}
    \caption{Selective Myopic bEhavior Control~(SMEC)}
    \label{alg:pseudocode}
    {\bf Input:}  Prior policies $\{\mu_i\}_{i=1}^K$, current task $\mathcal{M}$.\\
    {\bf Initialization:} Task policy $\pi$, value function $Q_{\theta}$, total policy selection counts $T:=0$, policy utilization counts $N^1 := \textit{\textbf{0}} \in \mathbb{R}^{K+1}$, policy transformation count matrix $N^2 := \textit{\textbf{0}} \in \mathbb{R}^{(K+1)\times (K+1)}$, ucb coefficient $c$, short-term horizon length $h$, short-term discount factor $\bar{\gamma}:=\epsilon^{\frac{1}{h}}$, batch size $B$, replay buffer $D$.
    \begin{algorithmic}[1]
        \setstretch{1.2}
        \STATE \# \texttt{Initialize behavior policy}
        \STATE $\eta \leftarrow \pi$ $\qquad$
        \FOR{t = 0, 1, 2, \dots}
            \STATE Sample transition $(s,a,s',r)$ from $\mathcal{M}$ using $\eta$
            \STATE $D~\leftarrow~D~\cup~(s,a,s',r)$
            \IF{ $(t+1)~\%~h == 0$}
                \STATE \# \texttt{Policy switch}
                \STATE $\eta^* \quad \leftarrow \underset{\nu\in\{\pi\}\cup\{\mu_i\}_{i=1}^K}{\arg\max} \left[ Q^\nu_{\theta}(s ,a) + c\cdot \sqrt{\dfrac{\log(2T)}{N^1_\nu + N^2_{\eta\rightarrow\nu}}}\right]$
                \STATE \# \texttt{Update ucb parameters}
                \STATE $T\qquad\quad\leftarrow T + 1$
                \STATE $N^1_{\eta^*}\qquad\leftarrow N^1_{\eta^*} + 1$
                \STATE $N^2_{\eta \rightarrow \eta^*}~~~\leftarrow N^2_{\eta \rightarrow \eta^*} + 1$
                \STATE \# \texttt{Update behavior policy}
                \STATE $\eta ~~~\leftarrow \eta^*$
            \ENDIF
            \STATE Sample a batch of transitions $\{(s,a,r,s')\}^B$ from replay buffer $D$
            \STATE Sample the batch of actions $\{a_\nu'\}^B$ form $\nu(\cdot|s')$ for each policy $\nu \in \{\pi\}\cup\{\mu_i\}_{i=1}^K$
            \STATE Train value function $Q_\theta$ by minimizing:
            \begin{align*}
                J(\theta):=&\dfrac{1}{2B}\sum_{\{(s,a,r,s')\}^B}\Bigl[\left(Q_{\theta}^{\pi,\gamma}(s,a) - \mathcal{T}^{\pi}_{\gamma}(s',r)\right)^2 + \sum_{i=1}^K \left(Q_{\theta}^{\mu_i,\bar{\gamma}}(s,a) - \mathcal{T}^{\mu_i}_{\bar{\gamma}}(s',r)\right)^2\Bigr]\\
                \text{where}~~&\mathcal{T}^{\pi}_{\gamma}(s',r) := r + \gamma~ Q_{\bar{\theta}}^{\pi,\gamma}(s',a_\pi'),\nonumber \qquad\qquad~~~\text{(long-term task policy operator)}\\
                \qquad\quad~~&\mathcal{T}^{\mu_i}_{\bar{\gamma}}(s',r) := r + \bar{\gamma}~ Q_{\bar{\theta}}^{\mu_i,\bar{\gamma}}(s',a_{\mu_i}'),\nonumber \qquad\quad~\text{(short-term prior policy operator)}
            \end{align*}
            \STATE Train task policy $\pi$ with the value function $Q_\theta^{\pi,\gamma}$ via SAC
        \ENDFOR
    \end{algorithmic}
\end{algorithm}

\section{Theoretical Results}
\label{app:theory}

\subsection{Performance Guarantee of the Behavior Policy}
\label{app:theory:performance}
This section presents the proofs of our main results, Theorem~\ref{theorem:value_bound} and Theorem~\ref{thm:perf_bound}. Specifically, we provide a rigorous analysis of the induced behavior policy $\eta$ using value-guided behavior planning (\textit{i.e.},~Eq(\ref{eq:q_guided_behavior_policy})). In Theorem~\ref{theorem_app:value_bound}, we prove that the performance of the behavior policy after a single policy switch is guaranteed to outperform the performance of the task policy. In Theorem~\ref{thm_app:perf_bound}, we prove that the behavior policy induced by the single switched sub-trajectory achieves a lower-bound performance compared with the task policy.

\begin{theorem}
    \label{theorem_app:value_bound}
    Following the behavior policy induced by Eq.~(\ref{eq:q_guided_behavior_policy}), when the prior policy $\bar{\mu}:= \arg\max_{\mu\in\{\mu_i\}_{i=1}^K}V^\mu_{\bar{\gamma}}(s_j)$ meets $V^{\bar{\mu}}_{\bar{\gamma}}(s_j)\geq V^\pi_{\gamma}(s_j), \forall s_j \in S, \exists~j \in [0, h, 2h, \dots]$, $\bar{\gamma}<\gamma$, and the policy $\eta$ is fixed after the switch. the induced value of $\eta$ can be bounded as follows:
    \begin{equation*}
        V_\gamma^{\eta}(s_j) - V_\gamma^{\pi}(s_j) ~\geq~ \dfrac{\gamma-\bar{\gamma}}{(1-\gamma)(1-\bar{\gamma})}R_{\text{max}}>0.
    \end{equation*}
\end{theorem}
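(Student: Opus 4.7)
The plan is to reduce everything to a statement about $\bar{\mu}$ alone, since the behavior policy $\eta$ coincides with $\bar{\mu}$ on the interval $[j,\infty)$ after the switch. This gives $V_\gamma^\eta(s_j) = V_\gamma^{\bar{\mu}}(s_j)$, so it suffices to lower-bound $V_\gamma^{\bar{\mu}}(s_j) - V_\gamma^\pi(s_j)$. The natural first move is to insert $\pm V_{\bar{\gamma}}^{\bar{\mu}}(s_j)$ and split the performance gap into an \textit{assumption-driven} piece and a \textit{discount-factor-driven} piece:
\begin{align*}
V_\gamma^{\bar{\mu}}(s_j) - V_\gamma^{\pi}(s_j)
= \underbrace{\bigl[V_\gamma^{\bar{\mu}}(s_j) - V_{\bar{\gamma}}^{\bar{\mu}}(s_j)\bigr]}_{\text{(A) discount gap}} + \underbrace{\bigl[V_{\bar{\gamma}}^{\bar{\mu}}(s_j) - V_\gamma^\pi(s_j)\bigr]}_{\text{(B) hypothesis}}.
\end{align*}
Term (B) is non-negative by the theorem's standing assumption $V_{\bar{\gamma}}^{\bar{\mu}}(s_j) \geq V_\gamma^\pi(s_j)$, so it does not harm the lower bound and can be discarded.

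For term (A), I would expand both value functions as their infinite-horizon power series and combine them under a single policy-$\bar{\mu}$ expectation, obtaining
\begin{align*}
V_\gamma^{\bar{\mu}}(s_j) - V_{\bar{\gamma}}^{\bar{\mu}}(s_j)
= \mathbb{E}_{\bar{\mu}}\Bigl[\textstyle\sum_{t=0}^\infty (\gamma^t - \bar{\gamma}^t)\, r(s_{j+t}, a_{j+t}) \,\Big|\, s_j\Bigr].
\end{align*}
Each coefficient $\gamma^t - \bar{\gamma}^t$ is non-negative since $\gamma > \bar{\gamma}$, and the closed-form geometric-series computation $\sum_{t=0}^\infty (\gamma^t - \bar{\gamma}^t) = \tfrac{1}{1-\gamma} - \tfrac{1}{1-\bar{\gamma}} = \tfrac{\gamma - \bar{\gamma}}{(1-\gamma)(1-\bar{\gamma})}$ together with $r\leq R_{\max}$ is exactly what produces the constant appearing in the conclusion.

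The hard part is arranging the calculation so that this constant emerges as a \textit{lower} bound on (A) rather than the \textit{upper} envelope a naive application of $r\leq R_{\max}$ would give. I would therefore expect the proof to couple brackets (A) and (B) before taking worst-case bounds, leveraging that the hypothesis caps $V_\gamma^\pi(s_j)$ at the short-horizon ceiling $R_{\max}/(1-\bar{\gamma})$, while $V_\gamma^{\bar{\mu}}(s_j)$ inherits the full long-horizon ceiling $R_{\max}/(1-\gamma)$; the extremal regime in which the hypothesis is tightest is precisely the one in which these two ceilings differ by $\tfrac{\gamma - \bar{\gamma}}{(1-\gamma)(1-\bar{\gamma})} R_{\max}$. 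Pinning down this extremal alignment — and verifying that strict positivity of $\gamma - \bar{\gamma}$ prevents the bound from degenerating — will be the main technical obstacle; once it is handled, the final inequality follows by combining the two bracket estimates and noting positivity.
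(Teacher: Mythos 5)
Your reduction to $V_\gamma^{\bar\mu}(s_j)-V_\gamma^{\pi}(s_j)$ and the insertion of $\pm V_{\bar\gamma}^{\bar\mu}(s_j)$ reproduce the paper's decomposition exactly, and your handling of term (B) via the hypothesis matches the paper's. The genuine gap is the one you flag yourself and then leave open: you never establish the lower bound $V_\gamma^{\bar\mu}(s_j)-V_{\bar\gamma}^{\bar\mu}(s_j)\geq \frac{\gamma-\bar\gamma}{(1-\gamma)(1-\bar\gamma)}R_{\text{max}}$ on term (A). Your power-series expansion shows only that (A) equals $\mathbb{E}_{\bar\mu}\bigl[\sum_{t}(\gamma^t-\bar\gamma^t)\,r(s_{j+t},a_{j+t})\bigr]$, which is nonnegative and is \emph{at most} the target constant; nothing forces it to be at least that constant. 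The ``extremal alignment'' you hope to exploit does not exist: if every reward along the $\bar\mu$-trajectory from $s_j$ is $0$ and $V_\gamma^\pi(s_j)=0$ (so the hypothesis holds with equality), then (A) $=$ (B) $=0$ and the claimed strictly positive bound fails outright. Coupling (A) with (B) cannot rescue this, because the hypothesis only caps $V_\gamma^\pi$ from above by $V_{\bar\gamma}^{\bar\mu}$; it supplies no mechanism pushing $V_\gamma^{\bar\mu}$ up toward its ceiling $R_{\text{max}}/(1-\gamma)$. From this decomposition one can only conclude $V_\gamma^\eta(s_j)-V_\gamma^\pi(s_j)\geq 0$.

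For what it is worth, the paper closes this exact step by invoking Lemma~\ref{lemma:discount_diff}, which states $V_{\gamma}^{\bar\mu}(s)-V_{\bar\gamma}^{\bar\mu}(s)\leq \frac{\gamma-\bar\gamma}{(1-\gamma)(1-\bar\gamma)}R_{\text{max}}$ --- an upper bound --- and applies it in the lower-bound direction. So your instinct that the naive estimate ``goes the wrong way'' is correct, and the step you could not complete is not completable as the theorem is stated; your proposal is not a proof, but the obstacle it runs into is real and is present (silently) in the paper's own argument.
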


\begin{proof}
    At the switch point $t=j$ with state $s_j, j \in [0, h, 2h, \dots]$, since the short-term behavior value of the most performant prior policy $\bar{\mu}:= \arg\max_{\mu\in\{\mu_i\}_{i=1}^K}V^\mu_{\bar{\gamma}}(s_j)$ surpasses the long-term behavior value of the task policy, we select the prior policy as the behavior policy for the subsequent interactions following the value guidance. The condition can be expressed as $V^{\bar{\mu}}_{\bar{\gamma}}(s_j)\geq V^\pi_{\gamma}(s_j), \forall s_j \in S, \exists~j \in [0, h, 2h, \dots]$. Therefore, we set the behavior policy after the switch point as 
    \begin{align*}
      &\eta(\cdot|s_j) = \bar{\mu}(\cdot|s_j):=\arg\max_{\{\mu_i\}_{i=1}^K}V^\mu_{\bar{\gamma}}(s_j)~(\cdot|s_t).
    \end{align*}
    
    Thus we can transform the value difference between the behavior policy and the task policy as follows:
    \begin{equation*}
        V^\eta_\gamma(s_j) - V^\pi_\gamma(s_j) = V^{\bar{\mu}}_\gamma(s_j) - V^\pi_\gamma(s_j).     
    \end{equation*}
    Since $V^{\bar{\mu}}_{\bar{\gamma}}(s_j)\geq V^\eta_{\gamma}(s_j)$ always holds under the condition, by assuming the behavior policy is fixed after the policy switch, we can bound the performance difference between two policies starting from the state $s_j$ as follows:
    \begin{align}
        V^{\eta}_\gamma(s_j) - V^\pi_\gamma(s_j) =&~ V^{\bar{\mu}}_\gamma(s_j) - V^{\bar{\mu}}_{\bar{\gamma}}(s_j) + V^{\bar{\mu}}_{\bar{\gamma}}(s_j) - V^\pi_\gamma(s_j)\nonumber\\
        \geq&~ \dfrac{\gamma-\bar{\gamma}}{(1-\gamma)(1-\bar{\gamma})}R_{\text{max}} + V^{\bar{\mu}}_{\bar{\gamma}}(s_j) - V^\pi_\gamma(s_j) & (\text{Lemma}~\ref{lemma:discount_diff}) \nonumber\\
        \geq & \dfrac{\gamma-\bar{\gamma}}{(1-\gamma)(1-\bar{\gamma})}R_{\text{max}} + 0 & \left(V^{\bar{\mu}}_{\bar{\gamma}}(s_j)\geq V^\pi_{\gamma}(s_j) \right)\nonumber\\
        =&~ \dfrac{\gamma-\bar{\gamma}}{(1-\gamma)(1-\bar{\gamma})}R_{\text{max}}. \nonumber\\
        >&~0 & (\gamma>\bar{\gamma})
    \end{align}

    The result demonstrates that the selected behavior policy following the value guidance is guaranteed to outperform the task policy after the switch point. 
\end{proof}

\begin{theorem}
    \label{thm_app:perf_bound}
    When there is only one sub-trajectory from $kt$ to $(k+1)h$ during which a prior policy $\bar{\mu}$ is selected, which means no prior policy $\mu \in \{\mu_i\}_{i=1}^K$ satisfies $V_{\bar{\gamma}}^\mu(s_t) \geq V_\gamma^\pi(s_t)$ except $t \in [kh, (k+1)h)$. The performance difference between the behavior policy $\eta$ induced by Eq.~(\ref{eq:q_guided_behavior_policy}) and the task policy $\pi$ is bounded as follows:
    $$
    J_\gamma(\eta) - J_\gamma(\pi) \geq \gamma^{kh} \dfrac{\gamma-\bar{\gamma}}{(1-\gamma)(1-\bar{\gamma})}R_{\text{max}} - \gamma^{(k+1)h} \dfrac{R_{\text{max}}}{(1-\gamma)^2}\|\bar{\mu}-\pi\|_\infty,
    $$
    where $\|\bar{\mu}-\pi\|_\infty := \underset{s\in S}{\sup}\underset{A}{\sum}|\bar{\mu}(a|s) - \pi(a|s)|$, and $\bar{\mu} = \arg\max_{\mu\in\{\mu_i\}_{i=1}^K} V^\mu_{\bar{\gamma}} (s_{kh}),~\forall s_{kh}\in \{s\in S | \mathbb{P}_{kh}^\pi(s) > 0\}$.
\end{theorem}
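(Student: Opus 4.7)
The plan is to factor the performance gap through the single switch point at time $kh$. Since $\eta$ coincides with $\pi$ on $[0, kh)$, the state distributions $\mathbb{P}^\eta_t$ and $\mathbb{P}^\pi_t$ agree for every $t \leq kh$, so the discounted rewards accumulated before $kh$ cancel in $J_\gamma(\eta) - J_\gamma(\pi)$. This gives the reduction
\begin{equation*}
J_\gamma(\eta) - J_\gamma(\pi) \;=\; \gamma^{kh}\,\mathbb{E}_{s_{kh}\sim \mathbb{P}^\pi_{kh}}\!\left[V^\eta_\gamma(s_{kh}) - V^\pi_\gamma(s_{kh})\right],
\end{equation*}
and it suffices to prove the required lower bound state-pointwise for every reachable $s_{kh}$.

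The key maneuver is an add-subtract with $V^{\bar{\mu}}_\gamma(s_{kh})$, the value of committing to $\bar{\mu}$ forever from $s_{kh}$:
\begin{equation*}
V^\eta_\gamma(s_{kh}) - V^\pi_\gamma(s_{kh}) \;=\; \underbrace{\bigl[V^{\bar{\mu}}_\gamma(s_{kh}) - V^\pi_\gamma(s_{kh})\bigr]}_{\text{gain at the switch}} \;+\; \underbrace{\bigl[V^\eta_\gamma(s_{kh}) - V^{\bar{\mu}}_\gamma(s_{kh})\bigr]}_{\text{cost of reverting to }\pi}.
\end{equation*}
Theorem~\ref{theorem:value_bound}, applied at $s_{kh}$ (where the selection hypothesis $V^{\bar{\mu}}_{\bar{\gamma}}(s_{kh}) \geq V^\pi_\gamma(s_{kh})$ holds by assumption since $\bar{\mu}$ is actually selected), immediately lower-bounds the gain by $\tfrac{\gamma-\bar{\gamma}}{(1-\gamma)(1-\bar{\gamma})}R_{\max}$. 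For the cost term I would use that $\eta$ and $\bar{\mu}$ play identically on $[kh,(k+1)h)$ and differ only in the tail, since the hypothesis of Theorem~\ref{thm:perf_bound} forces $\eta$ to revert to $\pi$ at every subsequent switch point. The first $h$ rewards therefore cancel exactly in expectation, yielding
\begin{equation*}
V^\eta_\gamma(s_{kh}) - V^{\bar{\mu}}_\gamma(s_{kh}) \;=\; \gamma^h\,\mathbb{E}_{s_{(k+1)h}\sim P^{\bar{\mu}}_h(\cdot\mid s_{kh})}\!\left[V^\pi_\gamma(s_{(k+1)h}) - V^{\bar{\mu}}_\gamma(s_{(k+1)h})\right].
\end{equation*}

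To close the argument I would invoke a standard simulation-lemma bound, $\|V^\pi_\gamma - V^{\bar{\mu}}_\gamma\|_\infty \leq \tfrac{R_{\max}}{(1-\gamma)^2}\|\pi-\bar{\mu}\|_\infty$, obtained by comparing the Bellman operators $T^\pi$ and $T^{\bar{\mu}}$ applied to $V^{\bar{\mu}}$ (whose sup-norm is at most $R_{\max}/(1-\gamma)$) and iterating through the $\gamma$-contraction. This controls the cost term from below by $-\gamma^h\tfrac{R_{\max}}{(1-\gamma)^2}\|\bar{\mu}-\pi\|_\infty$. Summing the gain and cost bounds and multiplying by $\gamma^{kh}$ delivers the claimed inequality. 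The only subtlety I expect is aligning the constant in the simulation lemma with the precise $\ell_1$ action-gap definition of $\|\bar{\mu}-\pi\|_\infty$ used here; this is a short computation using boundedness of rewards in $[0,R_{\max}]$, but it is the single nontrivial estimate in the proof and I would isolate it as an auxiliary lemma preceding the main argument.
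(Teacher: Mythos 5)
Your proposal follows essentially the same route as the paper's proof: the pre-switch segment cancels because $\eta$ and $\pi$ coincide on $[0,kh)$, the gap at $s_{kh}$ is split into the switch gain $V^{\bar{\mu}}_\gamma(s_{kh})-V^\pi_\gamma(s_{kh})$ (lower-bounded via Theorem~\ref{theorem:value_bound}) plus the discounted reversion cost at $s_{(k+1)h}$, and the latter is controlled by a bound of the form $\lvert V^\pi_\gamma(s)-V^{\bar{\mu}}_\gamma(s)\rvert \le \tfrac{R_{\max}}{(1-\gamma)^2}\lVert\bar{\mu}-\pi\rVert_\infty$ --- exactly the paper's terms $(a2)-(b2)$ and $(a3)$. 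The only divergence is in how that auxiliary value-difference estimate is established: the paper derives it from a discounted-visitation-distribution perturbation bound (Lemmas~\ref{lemma:state_dis_diff} and~\ref{lemma:value_diff}) and then converts total variation to the $\ell_1$ sup-norm, whereas you propose a Bellman-operator contraction (simulation-lemma) argument; both give the same constant, so this is a cosmetic rather than substantive difference.
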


\begin{figure}[h]
    \centering
    \includegraphics[width=\textwidth]{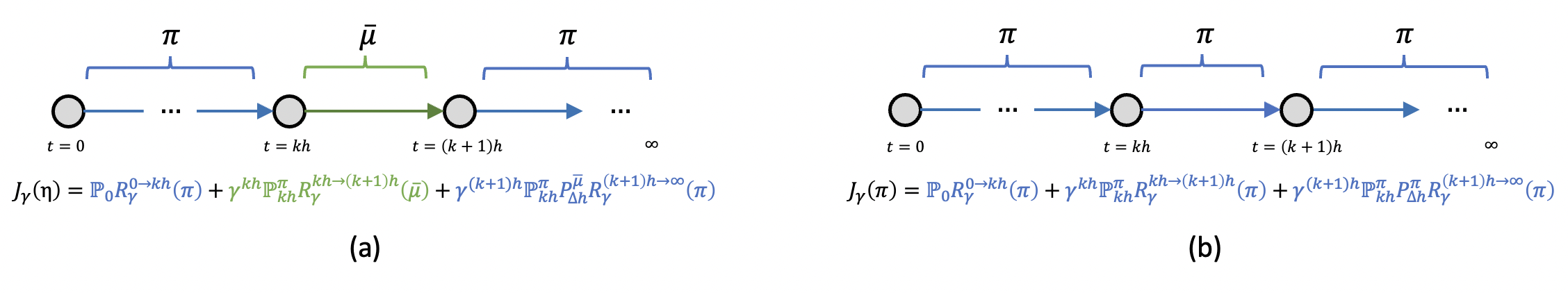}
    \vspace{-1.2em}
    \caption{(a)~The episode illustration and the performance of the behavior policy $\eta$ under the condition in Theorem~\ref{thm_app:perf_bound} with a single switched sub-trajectory. (b)~The episode illustration and the performance of the task policy $\pi$.}
    \label{fig:thm_illustration}
    \vspace{-0.5em}
\end{figure}

\begin{proof}
    Since the behavior policy induced by the value-guided selection is non-stationary within the episode~(\textit{i.e.}, $\pi$ in $t\in[0,~kh]$,  $\bar{\mu}$ in $t\in[kh,~(k+1)h]$, and $\pi$ in $t\in[(k+1)h,\infty]$), we first decompose the policy performance $J_\gamma(\nu), \forall \nu \in \{\pi\}\cup\{\mu_i\}_{i=1}^K$ into three components:
    \begin{align*}
        J_\gamma(\nu) &= \mathbb{P}_0 R_\gamma^{0\rightarrow kh}(\nu) + \gamma^{kh} \mathbb{P}_{kh}^\nu R_\gamma^{kh \rightarrow (k+1)h}(\nu) + \gamma^{(k+1)h} \mathbb{P}_{kh}^\nu R_\gamma^{(k+1)h \rightarrow \infty}(\nu),
    \end{align*}
    where $\mathbb{P}_0$ denotes the initial state distribution, $\mathbb{P}_t^\nu$ denotes the state distribution at time $t$ induced by the policy $\nu$ starting from the initial state distribution, and each component can be defined as
    \begin{align*}
        \mathbb{P}^\nu_{i}R_\gamma^{i\rightarrow j}(\nu) := \underset{s_i\in S}{\sum} \mathbb{P}_i^\nu(s_i) \left( V_\gamma^\nu(s_i) - \gamma^{j-i} \underset{s_j\in S}{\sum} P_{\Delta (j-i)}^\nu (s_j|s_i) V_\gamma^\nu(s_j) \right).
    \end{align*}

    Since during sub-trajectory from $kh$ to $(k+1)h$ the prior policy $\bar{\mu}$ is selected, as shown in Figure~\ref{fig:thm_illustration}~(a), the performance of the behavior policy $\eta$ can be defined as:
    \begin{align*}
        J_\gamma(\eta) &= \mathbb{P}_0 R_\gamma^{0\rightarrow kh}(\pi) + \gamma^{kh} \mathbb{P}_{kh}^\pi R_\gamma^{kh \rightarrow (k+1)h}(\bar{\mu}) + \gamma^{(k+1)h} 
 \mathbb{P}_{kh}^\pi P_{\Delta h}^{\bar{\mu}} R_\gamma^{(k+1)h \rightarrow \infty}(\pi)\\
        &=  \mathbb{P}_0 R_\gamma^{0\rightarrow kh}(\pi) \\
        &\quad~ + \gamma^{kh}  \underset{s_{kh}\in S}{\sum} \mathbb{P}_{kh}^\pi(s_{kh}) \left(V_\gamma^{\bar{\mu}} (s_{kh}) - \gamma^{h} \underset{s_{(k+1)h}\in S}{\sum} P_{\Delta h}^{\bar{\mu}}(s_{(k+1)h}|s_{kh}) V_\gamma^{\bar{\mu}} (s_{(k+1)h}) \right)\\
        &\quad~ + \gamma^{(k+1)h}\underset{s_{kh}\in S}{\sum} \mathbb{P}_{kh}^\pi(s_{kh}) \left(\underset{s_{(k+1)h}\in S}{\sum} P_{\Delta h}^{\bar{\mu}}(s_{(k+1)h}|s_{kh}) V_\gamma^{\pi}(s_{(k+1)h}) \right)\\
        &= \underbrace{\mathbb{P}_0 R_\gamma^{0\rightarrow kh}(\pi)}_{(a1)} + \underbrace{\gamma^{kh}\underset{s_{kh}\in S}{\sum} \mathbb{P}_{kh}^\pi(s_{kh}) V_\gamma^{\bar{\mu}} (s_{kh})}_{(a2)}\\
        &\quad~ + \underbrace{\gamma^{(k+1)h}\underset{s_{kh}\in S}{\sum} \mathbb{P}_{kh}^\pi(s_{kh})\underset{s_{(k+1)h}\in S}{\sum} P_{\Delta h}^{\bar{\mu}}(s_{(k+1)h}|s_{kh})\left(V_\gamma^{\pi}(s_{(k+1)h}) - V_\gamma^{\bar{\mu}}(s_{(k+1)h})\right)}_{(a3)},\\
    \end{align*}
    where $\bar{\mu} = \arg\max_{\{\mu_i\}_{i=1}^K} V^\mu_{\bar{\gamma}} (s_{kh}),~\forall s_{kh}\in \{s\in S | \mathbb{P}_{kh}^\pi(s) > 0\}$, $P_{\Delta h}^\pi(s_{(k+1)h}|s_{kh}):= \sum_{(a_{kh},\dots,s_{(k+1)h})} \left(\prod_{i=0}^{h-1}\pi(a_{kh+i}|s_{kh+i})\mathcal{P}(s_{kh+i+1}|s_{kh+i},a_{kh+i})\right)$ denotes the transition probability from state $s_{kh}$ to state $s_{(k+1)h}$ by rollouting policy $\pi$ for $h$ steps.

    As shown in Figure~\ref{fig:thm_illustration}~(b), the induced performance by simply using the task policy can also be decomposed as follows:
    \begin{align*}
        J_\gamma(\pi) &= \mathbb{P}_0 R_\gamma^{0\rightarrow kh}(\pi) + \gamma^{kh}\mathbb{P}_{kh}^\pi R_\gamma^{kh \rightarrow (k+1)h}(\pi) + \gamma^{(k+1)h} \mathbb{P}_{kh}^\pi P_{\Delta h}^{\pi} R_\gamma^{(k+1)h \rightarrow \infty}(\pi)\\
        &=  \mathbb{P}_0 R_\gamma^{0\rightarrow kh}(\pi) + \gamma^{kh}  \underset{s_{kh}\in S}{\sum} \mathbb{P}_{kh}^\pi(s_{kh}) \left(V_\gamma^{\pi} (s_{kh}) - \gamma^{h} \underset{s_{(k+1)h}\in S}{\sum} P_{\Delta h}^{\pi}(s_{(k+1)h}|s_{kh}) V_\gamma^{\pi} (s_{(k+1)h}) \right)\\
        &\qquad\qquad\quad\quad~~ + \gamma^{(k+1)h}\underset{s_{kh}\in S}{\sum} \mathbb{P}_{kh}^\pi(s_{kh}) \left( \underset{s_{(k+1)h}\in S}{\sum} P_{\Delta h}^{\pi}(s_{(k+1)h}|s_{kh}) V_\gamma^{\pi}(s_{(k+1)h}) \right)\\
        &= \underbrace{\mathbb{P}_0 R_\gamma^{0\rightarrow kh}(\pi)}_{(b1)} + \underbrace{\gamma^{kh}  \underset{s_{kh}\in S}{\sum} \mathbb{P}_{kh}^\pi(s_{kh}) V_\gamma^{\pi} (s_{kh})}_{(b2)}.
    \end{align*}

    Thus, the performance difference between the behavior policy $\eta$ and the task policy $\pi$ can be bound as follows:
    \begin{align*}
        &J_\gamma(\eta) - J_\gamma(\pi) = \bigl[(a1) - (a2)\bigr] + \bigl[(a2) - (b2)\bigr] + (a3)\\
        &~= 0 +\gamma^{kh} \underset{s_{kh}\in S}{\sum} \mathbb{P}_{kh}^\pi(s_{kh}) \left( V_\gamma^{\bar{\mu}} (s_{kh}) - V_\gamma^{\pi} (s_{kh})\right)\\
        &\quad + \gamma^{(k+1)h} \underset{s_{kh}\in S}{\sum} \mathbb{P}_{kh}^\pi(s_{kh})\underset{s_{(k+1)h}\in S}{\sum} P_{\Delta h}^{\bar{\mu}}(s_{(k+1)h}|s_{kh}) \left(V_\gamma^{\pi} (s_{(k+1)h}) - V_\gamma^{\bar{\mu}} (s_{(k+1)h})\right)\\
        &~\overset{(i)}{\geq} \gamma^{kh} \dfrac{(\gamma-\bar{\gamma})R_{\text{max}}}{(1-\gamma)(1-\bar{\gamma})} + \gamma^{(k+1)h} \underset{s_{kh}\in S}{\sum} \mathbb{P}_{kh}^\pi(s_{kh})\underset{s_{(k+1)h}\in S}{\sum} P_{\Delta h}^{\bar{\mu}}(s_{(k+1)h}|s_{kh}) \left(V_\gamma^{\pi} (s_{(k+1)h}) - V_\gamma^{\bar{\mu}} (s_{(k+1)h})\right)\\
        &~\overset{(ii)}{\geq} \gamma^{kh} \dfrac{\gamma-\bar{\gamma}}{(1-\gamma)(1-\bar{\gamma})}R_{\text{max}}\\
        &\quad - \gamma^{(k+1)h}\Bigl(\underset{s_{kh}\in S}{\sum} \mathbb{P}_{kh}^\pi(s_{kh})\underset{s_{(k+1)h}\in S}{\sum}P_{\Delta h}^{\bar{\mu}}(s_{(k+1)h}|s_{kh}) \dfrac{2R_{\text{max}}}{(1-\gamma)^2}~\mathbb{E}_{d^{\bar{\mu}}(\cdot|s_0=s_{(k+1)h})}\left[D_{TV}\left[\bar{\mu}(\cdot|s)\|\pi(\cdot|s)\right]\right]\Bigr)\\
        &~\overset{(iii)}{\geq} \gamma^{kh} \dfrac{\gamma-\bar{\gamma}}{(1-\gamma)(1-\bar{\gamma})}R_{\text{max}} - \gamma^{(k+1)h} \dfrac{R_{\text{max}}}{(1-\gamma)^2} \|\bar{\mu}-\pi\|_\infty, 
    \end{align*}
    where steps $(i)$ holds by Theorem~\ref{theorem_app:value_bound}, $(ii)$ holds by Lemma~\ref{lemma:value_diff}, and $(iii)$ holds by
    \begin{align*}
        &\qquad\underset{s_{kh}\in S}{\sum} \mathbb{P}_{kh}^\pi(s_{kh})\underset{s_{(k+1)h}\in S}{\sum}P_{\Delta h}^{\bar{\mu}}(s_{(k+1)h}|s_{kh}) \dfrac{2R_{\text{max}}}{(1-\gamma)^2}~\mathbb{E}_{d^{\bar{\mu}}(\cdot|s_0=s_{(k+1)h})}\left[D_{TV}\left[\bar{\mu}(\cdot|s)\|\pi(\cdot|s)\right]\right]\\
        &= \underset{s_{kh}\in S}{\sum} \mathbb{P}_{kh}^\pi(s_{kh})\underset{s_{(k+1)h}\in S}{\sum}P_{\Delta h}^{\bar{\mu}}(s_{(k+1)h}|s_{kh}) \dfrac{2R_{\text{max}}}{(1-\gamma)^2}~\mathbb{E}_{d^{\bar{\mu}}(\cdot|s_0=s_{(k+1)h})}\left[\dfrac{1}{2}\underset{A}{\sum}|\bar{\mu}(a|s) - \pi(a|s)|\right]\\
        &= \underset{s_{kh}\in S}{\sum} \mathbb{P}_{kh}^\pi(s_{kh})\underset{s_{(k+1)h}\in S}{\sum}P_{\Delta h}^{\bar{\mu}}(s_{(k+1)h}|s_{kh}) \dfrac{R_{\text{max}}}{(1-\gamma)^2}~\underset{S}{\sum} d^{\bar{\mu}}(s|s_0=s_{(k+1)h}) \|\bar{\mu}(\cdot|s) - \pi(\cdot|s)\|_1\\
        &\leq \underset{s_{kh}\in S}{\sum} \mathbb{P}_{kh}^\pi(s_{kh})\underset{s_{(k+1)h}\in S}{\sum}P_{\Delta h}^{\bar{\mu}}(s_{(k+1)h}|s_{kh}) \dfrac{R_{\text{max}}}{(1-\gamma)^2}~\underset{S}{\sum} d^{\bar{\mu}}(s|s_0=s_{(k+1)h}) \underset{S}{\sup} \|\bar{\mu}(\cdot|s) - \pi(\cdot|s)\|_1\\
        &= \dfrac{R_{\text{max}}}{(1-\gamma)^2} \|\bar{\mu} - \pi\|_\infty.
    \end{align*}
\end{proof}

\subsection{Useful Lemmas}
\label{app:theory:useful_lemmas}
This section provides proof of several lemmas used for our theoretical results. The first two lemmas are adopted from Lemma~1 in \cite{jiang2015dependence} and Lemma~3 in \cite{achiam2017constrained}, respectively, and the proof is essentially the same as the original paper. The last lemma provides value difference bound over two different policies starting from the same state.

\begin{lemma}
    {\normalfont \textbf{(Lemma~1 in \cite{jiang2015dependence})}}
    \label{lemma:discount_diff}
    For any MDP $M$ with rewards in $[0, R_{\text{max}}]$, $\forall \pi: \mathcal{S} \rightarrow \mathcal{A}$ and $\gamma_1 \neq \gamma_2$,
    \begin{equation*}
        V^\pi_{\gamma_1}(s) - V^\pi_{\gamma_2}(s) \leq \|V^\pi_{\gamma_1} - V^\pi_{\gamma_2}\|_{\infty} \leq \dfrac{\gamma_1 - \gamma_2}{(1-\gamma_1)(1-\gamma_2)}R_\text{max}
    \end{equation*}
\end{lemma}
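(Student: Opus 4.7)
The plan is to unroll both value functions as infinite sums of discounted rewards, bound the pointwise difference term by term using the reward range, and then evaluate the resulting geometric-series difference in closed form. Since the bound on the right-hand side is only nonnegative when $\gamma_1 \ge \gamma_2$, I would first note that we may assume $\gamma_1 > \gamma_2$ without loss of generality; the first inequality $V^\pi_{\gamma_1}(s) - V^\pi_{\gamma_2}(s) \le \|V^\pi_{\gamma_1} - V^\pi_{\gamma_2}\|_\infty$ is then just the definition of the sup norm applied at the argument $s$.

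For the second inequality, I would start from the series representation
\[
V^\pi_{\gamma_1}(s) - V^\pi_{\gamma_2}(s)
= \mathbb{E}\Bigl[\sum_{t=0}^\infty \bigl(\gamma_1^t - \gamma_2^t\bigr)\, r(s_t, a_t) \,\Big|\, s_0 = s\Bigr],
\]
which follows from the definition of $V^\pi_\gamma$ given in the Preliminaries together with Fubini/dominated convergence (justified because $|r| \le R_{\max}$ and both geometric series converge for $\gamma_1,\gamma_2 \in [0,1)$). Under the assumption $\gamma_1 > \gamma_2$ we have $\gamma_1^t - \gamma_2^t \ge 0$ for every $t \ge 0$, and $r(s_t,a_t) \le R_{\max}$, so term-by-term bounding yields
\[
V^\pi_{\gamma_1}(s) - V^\pi_{\gamma_2}(s)
\le R_{\max} \sum_{t=0}^\infty \bigl(\gamma_1^t - \gamma_2^t\bigr).
\]
Since this upper bound is independent of the state $s$, taking the supremum over $s$ gives the same bound on $\|V^\pi_{\gamma_1} - V^\pi_{\gamma_2}\|_\infty$.

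The remaining step is the elementary summation
\[
\sum_{t=0}^\infty \bigl(\gamma_1^t - \gamma_2^t\bigr)
= \frac{1}{1-\gamma_1} - \frac{1}{1-\gamma_2}
= \frac{\gamma_1 - \gamma_2}{(1-\gamma_1)(1-\gamma_2)},
\]
which produces exactly the stated bound. There is no serious obstacle here: the proof is essentially a one-line expansion plus a geometric-series computation. The only points that require a bit of care are (i) justifying WLOG $\gamma_1 > \gamma_2$ so that the sign of the right-hand side is consistent with the left-hand side, and (ii) ensuring the interchange of expectation and infinite sum, both of which are straightforward given the bounded-reward assumption and $\gamma_i < 1$.
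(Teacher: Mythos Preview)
Your proposal is correct and follows essentially the same approach as the paper's proof: unroll each value function as an infinite discounted-reward sum, bound term by term using $r\in[0,R_{\max}]$ and $\gamma_1^t-\gamma_2^t\ge 0$, and evaluate the resulting geometric-series difference. The paper presents the same argument in matrix notation $\bigl\|\sum_{t\ge 0}(\gamma_1^t-\gamma_2^t)[P^\pi]^t R^\pi\bigr\|_\infty$ rather than expectation notation, but the content is identical; your explicit WLOG $\gamma_1>\gamma_2$ and Fubini justification are minor clarifications the paper leaves implicit.
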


\begin{proof}
    Letting $[P^\pi]$ denotes the transition probability matrix for policy $\pi$ (matrix form of $P^\pi(\cdot|\cdot),~\forall s,s'\in S$), we have
    \begin{align*}
        V^\pi_{\gamma_1}(s) - V^\pi_{\gamma_2}(s)~\leq~\|V^\pi_{\gamma_1} - V^\pi_{\gamma_2}\|_{\infty}~=&~\left\|\sum_{t=0}^{\infty} (\gamma_1^t - \gamma_2^t) [P^\pi]^t R^\pi\right\|_{\infty}\\
        \leq& ~\sum_{t=0}^\infty (\gamma_1^t - \gamma_2^t) R_{\text{max}} \\
        =& ~\dfrac{\gamma_1 - \gamma_2}{(1-\gamma_1)(1-\gamma_2)} R_{\text{max}}
    \end{align*}
\end{proof}

\begin{lemma}
    \label{lemma:state_dis_diff}
    {\normalfont \textbf{(Lemma~3 in \cite{achiam2017constrained})}}
    For two policies $\pi$ and $\eta~: S\rightarrow A$, the discounted state distributions of the two policies can be bounded like:
    \begin{equation*}
        \|d^\pi - d^\eta\|_1 \leq \dfrac{2\gamma}{1-\gamma}\mathbb{E}_{s\sim d^\eta}\left[D_{TV}(\pi\| \eta)\right].
    \end{equation*}
\end{lemma}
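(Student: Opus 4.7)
The plan is to work with the closed-form expression for the discounted state-occupancy measure and exploit a matrix resolvent identity, reducing the state-distribution gap to a one-step action-distribution gap integrated against $d^\eta$. Let $P_\pi$ denote the policy-induced state transition matrix with entries $P_\pi(s'\mid s) = \sum_a \pi(a\mid s)\mathcal{P}(s'\mid s,a)$, and analogously for $P_\eta$. Viewing distributions as column vectors, the definition of $d^\pi$ in the preliminaries yields the compact identity $d^\pi = (1-\gamma)(I - \gamma P_\pi^\top)^{-1} d_0$, and similarly for $d^\eta$.

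First I would apply the resolvent identity $A^{-1} - B^{-1} = A^{-1}(B - A)B^{-1}$ with $A = I - \gamma P_\pi^\top$ and $B = I - \gamma P_\eta^\top$. Absorbing the $(1-\gamma)$ factor into the right-hand inverse to produce $d^\eta$, this gives
\begin{equation*}
d^\pi - d^\eta \;=\; \gamma\,(I - \gamma P_\pi^\top)^{-1}(P_\pi^\top - P_\eta^\top)\,d^\eta.
\end{equation*}
Next I would expand the resolvent as the Neumann series $(I - \gamma P_\pi^\top)^{-1} = \sum_{t=0}^{\infty}\gamma^t (P_\pi^\top)^t$. Because $P_\pi$ is row-stochastic, $P_\pi^\top$ has unit column sums and is therefore a non-expansion in $\ell_1$ on signed vectors. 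Applying the triangle inequality and this non-expansion termwise gives $\|d^\pi - d^\eta\|_1 \le \frac{\gamma}{1-\gamma}\,\|(P_\pi^\top - P_\eta^\top) d^\eta\|_1$.

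It remains to bound the one-step discrepancy $w := (P_\pi^\top - P_\eta^\top) d^\eta$. Unpacking $P_\pi(s'\mid s) - P_\eta(s'\mid s) = \sum_a (\pi(a\mid s) - \eta(a\mid s))\mathcal{P}(s'\mid s,a)$, pulling the absolute value inside the sum over $s,a$, and marginalising out $s'$ via $\sum_{s'}\mathcal{P}(s'\mid s,a) = 1$ leaves
\begin{equation*}
\|w\|_1 \;\le\; \sum_s d^\eta(s)\sum_a |\pi(a\mid s) - \eta(a\mid s)| \;=\; 2\,\mathbb{E}_{s\sim d^\eta}\bigl[D_{TV}(\pi\,\|\,\eta)\bigr],
\end{equation*}
using the standard identification of total variation with half the $\ell_1$ distance between the action distributions. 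Combining the two bounds produces the claimed inequality.

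The main obstacle is not analytic but bookkeeping: one has to be consistent about whether distributions act as row or column vectors, whether $P_\pi$ is row- or column-stochastic after transposition, and how the normalising $(1-\gamma)$ factor gets reassigned so that the right-hand side of the resolvent identity produces exactly $d^\eta$ rather than $(1-\gamma)^{-1}d^\eta$. Once that algebra is pinned down, the $\ell_1$ non-expansion of $(P_\pi^\top)^t$ and the TV-to-$\ell_1$ reduction are standard, and no additional structural assumption on the MDP is needed.
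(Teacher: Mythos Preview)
Your proposal is correct and follows essentially the same route as the paper: closed-form occupancy measure, a resolvent identity for $(I-\gamma P)^{-1}$, Neumann-series bound on the resolvent's $\ell_1$ operator norm, and reduction of the one-step kernel discrepancy to a TV distance. The only cosmetic difference is that you apply the resolvent identity in the order $A^{-1}-B^{-1}=A^{-1}(B-A)B^{-1}$, which puts $d^\eta$ on the right and matches the stated bound exactly, whereas the paper uses the mirrored order $G-G'=\gamma G'\Delta G$ and ends up with the expectation taken under $d^\pi$ instead.
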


\begin{proof}
    First can transform the discounted state distribution in the vector form to
    \begin{align*}
        d^\pi &= (1-\gamma) \sum_{t=0}^\infty \gamma^t (P^\pi)^t \rho = (1-\gamma)(1-\gamma P^\pi)^{-1}  \rho,
    \end{align*}
    where $\rho$ is the initial state distribution.

    We define the matrices $G := (I - \gamma P^\pi)^{-1}$, $G' := (I - \gamma P^\eta)^{-1}$ and $\Delta := P^\pi - P^\eta$. Then we have:
    \begin{align*}
        G'^{-1} - G^{-1} &= (I - \gamma P^\eta) - (I - \gamma P^\pi) = \gamma(P^\pi - P^\eta) = \gamma \Delta.
    \end{align*}
    left-multiplying by $G$ and right-multiplying by $G'$, we obtain:
    \begin{equation*}
        G - G' = \gamma G' \Delta G.
    \end{equation*}
    Thus
    \begin{align}
        d^\pi - d^\eta &= (1 - \gamma) (G - G') \rho = (1-\gamma) \gamma G' \Delta G \rho = \gamma G' \Delta d^\pi.   \label{state_dist_diff}
    \end{align}

    Then we bound the norm:
    \begin{align*}
        \|d^\pi - d^\eta\|_1 &= \|\gamma G' \Delta d^\pi\|_1 \leq \gamma \|G'\|_1 \|\Delta d^\pi\|_1 \\
    \end{align*}

    $\|G'\|_1$ is bounded by:
    \begin{align}
        \|G'\|_1 = \|(I - \gamma P^\eta)^{-1}\|_1 \leq \sum_{t=0}^\infty \gamma^t \|P^\eta\|_1^t = \sum_{t=0}^\infty \gamma^t \cdot 1 = \dfrac{1}{1-\gamma}.   
    \end{align}

    $\|\Delta d^\pi\|_1$ is bounded by:
    \begin{align*}
        \|\Delta d^\pi\|_1 &= \sum_{s'} | \sum_{s} \Delta(s'|s) d^\pi (s) |\\
        &= \sum_{s'} | \sum_{s} (P^\pi(s'|s) - P^\eta(s'|s)) d^\pi(s) |\\
        &\leq \sum_{s,s'} |P^\pi(s'|s) - P^\eta(s'|s)| d^\pi(s) \\
        &= \sum_{s,s'} \left| \sum_{a} P(s'|s,a) (\pi(a|s) - \eta(a|s)) \right| d^\pi(s)\\
        &\leq \sum_{s,a} \left|\pi(a|s) - \eta(a|s)\right| d^\pi(s) \sum_{s'} P(s'|s,a)\\
        &= \sum_{s} d^\pi(s) \sum_{a} |\pi(a|s) - \eta(a|s)|\\
        &= 2\mathbb{E}_{s\sim d^\pi(\cdot)}\left[D_{TV}\left[\pi(\cdot|s)\|\eta(\cdot|s)\right]\right].
    \end{align*}

    Combining the two terms above we can obtain:
    \begin{align*}
        \|d^\pi - d^\eta\|_1 &\leq \gamma \cdot \dfrac{1}{1-\gamma} \cdot 2\mathbb{E}_{s\sim d^\pi(\cdot)}\left[D_{TV}\left[\pi(\cdot|s)\|\eta(\cdot|s)\right]\right]\\
        &= \dfrac{2\gamma}{1-\gamma} \mathbb{E}_{d^\pi}\left[D_{TV}\left[\pi(\cdot|s)\|\eta(\cdot|s)\right]\right].
    \end{align*}
\end{proof}

\begin{lemma}
    \label{lemma:value_diff}
    The value difference of two policies $\pi, \eta$ staring from the same state $\tilde{s}$ is bounded as follows:
    \begin{equation*}
        \left|V^\pi(\tilde{s}) - V^\eta(\tilde{s})\right| \leq \dfrac{2R_{\text{max}}}{(1-\gamma)^2}~\mathbb{E}_{d^\pi(\cdot|s_0=\tilde{s})}\left[D_{TV}\left[\pi(\cdot|s)\|\eta(\cdot|s)\right]\right].
    \end{equation*}
\end{lemma}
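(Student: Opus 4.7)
The plan is to reduce the value difference to a telescoping expression over discounted state-action occupancies and then control each piece using Lemma~\ref{lemma:state_dis_diff}. First, I would rewrite each value in the occupancy form
$$V^\pi(\tilde s) \;=\; \frac{1}{1-\gamma}\,\mathbb{E}_{s\sim d^\pi(\cdot|s_0=\tilde s),\, a\sim\pi(\cdot|s)}\bigl[r(s,a)\bigr],$$
and analogously for $V^\eta(\tilde s)$, so that the difference becomes a weighted integral of reward against the difference of two product measures $d^\pi\pi - d^\eta\eta$ on $\mathcal S\times\mathcal A$.

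Next, I would add and subtract the hybrid term $\sum_s d^\pi(s)\sum_a \eta(a|s)\,r(s,a)$ to split the gap into (i) an \emph{action-mismatch} term under the $d^\pi$ weighting, namely $\sum_s d^\pi(s)\sum_a(\pi(a|s)-\eta(a|s))\,r(s,a)$, and (ii) an \emph{occupancy-mismatch} term $\sum_s(d^\pi(s)-d^\eta(s))\sum_a\eta(a|s)\,r(s,a)$. Using $r\in[0,R_{\max}]$ together with the identity $\sum_a|\pi(a|s)-\eta(a|s)|=2\,D_{TV}[\pi(\cdot|s)\,\|\,\eta(\cdot|s)]$, the first term is bounded by $2R_{\max}\,\mathbb{E}_{s\sim d^\pi}[D_{TV}[\pi\|\eta]]$. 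For the second, I would apply Lemma~\ref{lemma:state_dis_diff} in the form $\|d^\pi-d^\eta\|_1\le \tfrac{2\gamma}{1-\gamma}\mathbb{E}_{s\sim d^\pi}[D_{TV}[\pi\|\eta]]$ (which is the bound actually established in that proof, since $\|\Delta d^\pi\|_1$ is controlled by an expectation under $d^\pi$), yielding a bound of $\tfrac{2\gamma R_{\max}}{1-\gamma}\mathbb{E}_{s\sim d^\pi}[D_{TV}[\pi\|\eta]]$.

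Finally, I would combine the two pieces. After multiplying through by $\tfrac{1}{1-\gamma}$ to undo the normalization, the coefficients telescope as
$$\frac{1}{1-\gamma}\Bigl(2R_{\max}+\frac{2\gamma R_{\max}}{1-\gamma}\Bigr)\;=\;\frac{2R_{\max}}{1-\gamma}\cdot\frac{1}{1-\gamma}\;=\;\frac{2R_{\max}}{(1-\gamma)^2},$$
and taking absolute values gives exactly the claimed bound, where the conditioning $s_0=\tilde s$ is inherited from the fact that $d^\pi(\cdot|s_0=\tilde s)$ is the occupancy used throughout.

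The only delicate point I would flag is the choice of weighting distribution: the decomposition must be arranged so the same $d^\pi$ appears in both error terms, because Lemma~\ref{lemma:state_dis_diff} (as proved above) naturally provides the TV estimate under $d^\pi$, not $d^\eta$. A careless split that produces an expectation under $d^\eta$ in the action-mismatch term would force an additional change-of-measure step and could inflate constants; aligning both terms under $d^\pi$ from the start keeps the argument clean and matches the final bound.
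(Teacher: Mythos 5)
Your proposal is correct and follows essentially the same route as the paper's proof: the same add-and-subtract decomposition through the hybrid term $d^\pi\eta$, the same bound on the action-mismatch term via $\sum_a|\pi(a|s)-\eta(a|s)|=2D_{TV}$, and the same invocation of Lemma~\ref{lemma:state_dis_diff} for the occupancy-mismatch term, with identical arithmetic at the end. Your remark that the state-distribution lemma, as actually proved, delivers the TV expectation under $d^\pi$ rather than $d^\eta$ is a correct and worthwhile observation that matches how the paper uses it.
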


\begin{proof}
    We can obtain:
    \begin{align*}
        &\left|V^\pi(\tilde{s}) - V^\eta(\tilde{s})\right| \\
        &= \left|\dfrac{1}{1-\gamma}\left(\mathbb{E}_{\rho^\pi(\cdot|s_0=\tilde{s})}\left[r(s,a)\right] - \mathbb{E}_{\rho^\eta(\cdot|s_0=\tilde{s})}\left[r(s,a)\right]\right)\right|\\
        &= \dfrac{1}{1-\gamma}\left|\sum_{s,a} \left(\rho^\pi(s,a|s_0=\tilde{s}) - \rho^\eta(s,a|s_0=\tilde{s})\right) r(s,a)\right|\\
        &\leq \dfrac{R_{\text{max}}}{1-\gamma} \sum_{s,a} \left|\rho^\pi(s,a|s_0=\tilde{s}) - \rho^\eta(s,a|s_0=\tilde{s})\right| \\
        &= \dfrac{R_{\text{max}}}{1-\gamma} \sum_{s,a} \Bigl|d^\pi(s|s_0=\tilde{s})\pi(a|s) - d^\eta(s|s_0=\tilde{s})\eta(a|s)\Bigr|\\
        &= \dfrac{R_{\text{max}}}{1-\gamma} \sum_{s,a} \Bigl|d^\pi(s|s_0=\tilde{s})\pi(a|s) - d^\pi(s|s_0=\tilde{s})\eta(a|s) + d^\pi(s|s_0=\tilde{s})\eta(a|s) - d^\eta(s|s_0=\tilde{s})\eta(a|s)\Bigr|\\ 
        &\leq \dfrac{R_{\text{max}}}{1-\gamma} \Bigl[ \sum_{s,a} \left| \pi(a|s) -\eta(a|s) \right| d^\pi(s|s_0=\tilde{s}) + \sum_{s,a} \left| d^\pi(s|s_0=\tilde{s}) -d^\eta(s|s_0=\tilde{s}) \eta(a|s)\right| \Bigr]\\
        &= \dfrac{R_{\text{max}}}{1-\gamma} \Bigl[ 2\mathbb{E}_{d^\pi(\cdot|s_0=\tilde{s})}\left[ D_{TV}\left[\pi(\cdot|s)\|\eta(\cdot|s)\right]\right] + \sum_{s,a} \left|d^\pi(s|s_0=\tilde{s}) - d^\eta(s|s_0=\tilde{s})) \eta(a|s)\right| \Bigr]\\
        &\leq \dfrac{R_{\text{max}}}{1-\gamma} \left[ 2\mathbb{E}_{d^\pi(\cdot|s_0=\tilde{s})}\left[ D_{TV}\left[\pi(\cdot|s)\|\eta(\cdot|s)\right]\right] + \|d^\pi(\cdot|s_0=\tilde{s}) - d^\eta(\cdot|s_0=\tilde{s})\|_1 \|\eta\|_\infty \right] \qquad~~ \text{(Holder's)}\\
        &\leq \dfrac{R_{\text{max}}}{1-\gamma} \left[ 2\mathbb{E}_{d^\pi(\cdot|s_0=\tilde{s})}\left[ D_{TV}\left[\pi(\cdot|s)\|\eta(\cdot|s)\right]\right] + \|d^\pi(\cdot|s_0=\tilde{s}) - d^\eta(\cdot|s_0=\tilde{s})\|_1 \right] \qquad\qquad (\|\eta\|_\infty\leq 1)
    \end{align*}

    By setting the initial state distribution $\rho$ in Lemma~\ref{lemma:state_dis_diff} to an one-hot vector with only one at the position $\tilde{s}$, we can apply Lemma~\ref{lemma:state_dis_diff} and derive:
    \begin{align*}
        \left|V^\pi(\tilde{s}) - V^\eta(\tilde{s})\right| &\leq \dfrac{R_{\text{max}}}{1-\gamma} \left[ 2\mathbb{E}_{d^\pi(\cdot|s_0=\tilde{s})}\left[ D_{TV}\left[\pi(\cdot|s)\|\eta(\cdot|s)\right]\right] + \dfrac{2\gamma}{1-\gamma} \mathbb{E}_{d^\pi(\cdot|s_0=\tilde{s})}\left[D_{TV}\left[\pi(\cdot|s)\|\eta(\cdot|s)\right]\right] \right]\\
        &= \dfrac{2R_{\text{max}}}{(1-\gamma)^2}~\mathbb{E}_{d^\pi(\cdot|s_0=\tilde{s})}\left[D_{TV}\left[\pi(\cdot|s)\|\eta(\cdot|s)\right]\right].
    \end{align*}
\end{proof}

\section{Experimental Settings}
\label{app:setting}

\begin{figure}[t]
    \centering
    \includegraphics[width=\textwidth]{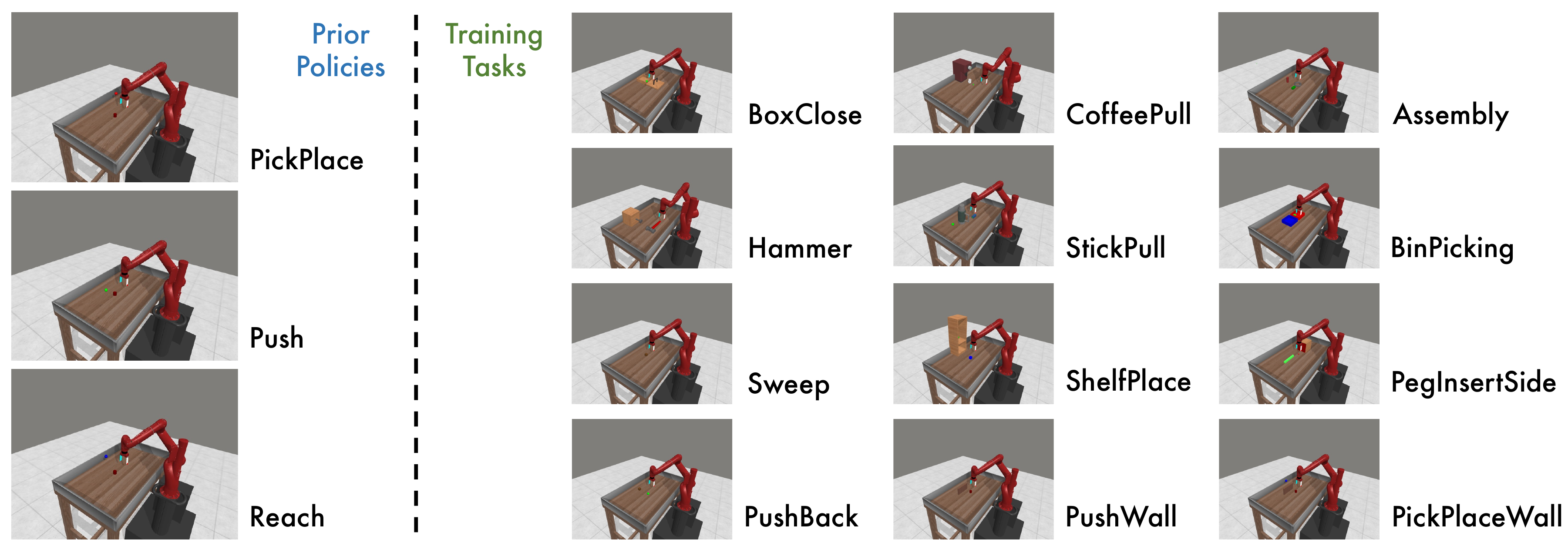}
    \caption{The prior policies and training tasks of MetaWorld experiments in Section~\ref{sec:exp:basic_results}.}
    \label{fig:metaworld_env_illustration}
\end{figure}

\begin{figure}[t]
    \centering
    \includegraphics[width=\textwidth]{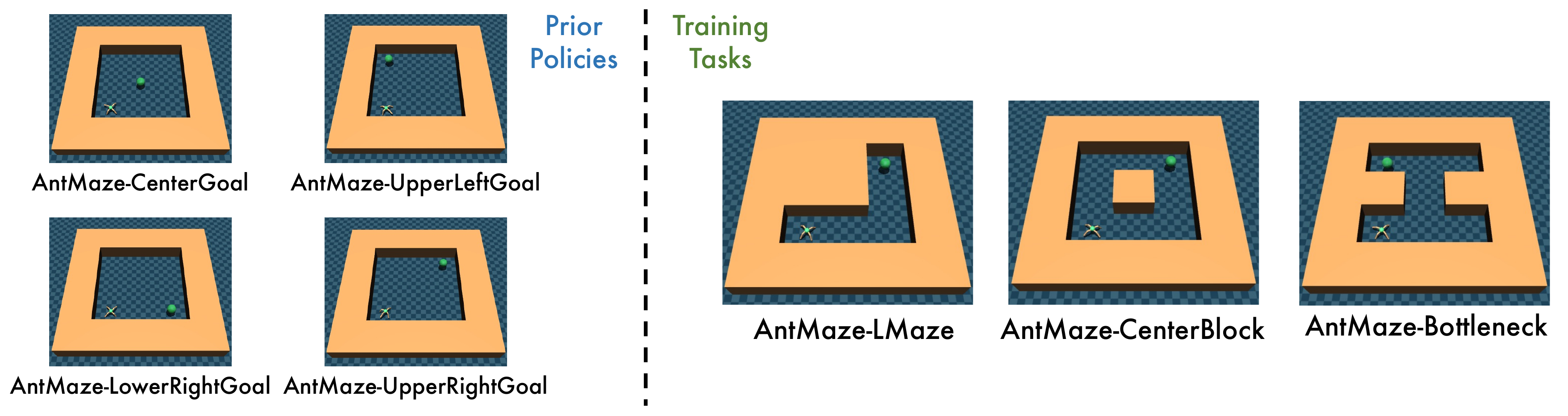}
    \caption{The prior policies and training tasks of AntMaze experiments in Section~\ref{sec:exp:basic_results}. The green ball represents the goal position across all environments.}
    \label{fig:antmaze_env_illustration}
\end{figure}

\subsection{Environment Setting Details}
\label{app:setting:env}

In this section, we provide details on the environments. 

\textbf{MetaWorld}~
For the experiments in Section~\ref{sec:exp:basic_results}, we use three prior policies~(\textit{Push}, \textit{Reach}, \textit{PickPlace}) that are trained with SAC~\cite{haarnoja2018soft} and can perform $100\%$ success rate in the corresponding tasks. We use 12 tasks different from those of the prior policies for the downstream tasks. Following settings in prior works~\cite{zhangcup,yang2020multi}, we randomly reset the goal positions at the start of each episode. The illustration of the environments is shown in Figure~\ref{fig:metaworld_env_illustration}. To provide rigorous analysis, we examine the zero-shot performances of all prior policies in the downstream tasks. As shown in Figure~\ref{fig:zero-shot_performance_mix}~(Left), the prior policies hardly perform high-return behaviors in most cases. For experiments in Section~\ref{sec:exp:identify}, we use four sets of prior policies:~Set 1~(\textit{Push}, \textit{Reach}, \textit{PickPlace}),~Set 2~(\textit{Push}, \textit{Reach}, \textit{PickPlace}, \textit{Hammer}, \textit{PushBack}, \textit{ShelfPlace}),~Set 3~(\textit{Push}, \textit{Reach}, \textit{PickPlace}, \textit{Hammer}, \textit{PushBack}, \textit{ShelfPlace}, \textit{a sub-optimal task policy}),~Set 4~(\textit{Push}, \textit{Reach}, \textit{PickPlace}, \textit{Hammer}, \textit{PushBack}, \textit{ShelfPlace}, \textit{a sub-optimal task policy}, \textit{an optimal task policy}). 

\textbf{AntMaze}~
For the experiments in Section~\ref{sec:exp:basic_results}, we use 4 prior policies trained for approaching different goals in an empty grid and 3 downstream tasks with diverse maze layouts and goals, as shown in Figure~\ref{fig:antmaze_env_illustration}. The agent obtains a reward of $100$ only if the agent reaches the goal. The zero-shot performances of the prior policies in each downstream task are shown in Figure~\ref{fig:zero-shot_performance_mix}~(Right). The results validate that the prior policies can not achieve the desired goals in all tasks due to the mismatched maze layouts. 

\begin{figure}[h]
    \centering
    \includegraphics[width=\textwidth]{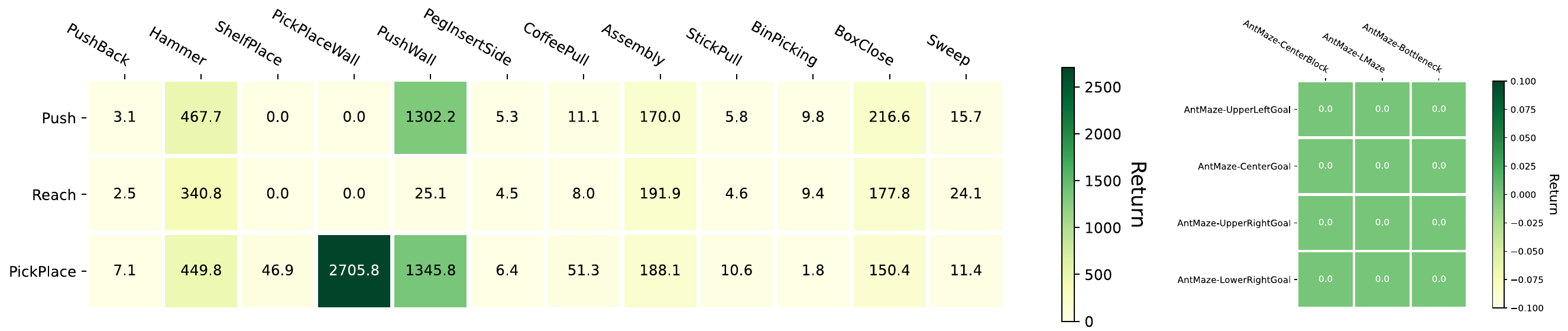}
    \caption{(\textit{Left})~The zero-shot performance of the prior policies in 12 downstream tasks of MetaWorld. The results demonstrate that the prior policies rarely perform high-return behaviors in the downstream tasks. (\textit{Right})~The zero-shot performance of the prior policies in 3 downstream tasks of AntMaze. The results demonstrate that no prior policies can approach the desired goals due to the mismatched maze layouts.}
    \label{fig:zero-shot_performance_mix}
\end{figure}

\begin{table}[t]
    \centering
    \caption{Hyperparameter configuration, which is shared across all runs of SMEC.}
    \vspace{0.6em}
    \begin{tabular}{cc}
        \toprule
        \textbf{Hyperparameter} & \textbf{Value} \\
        \midrule
        Number of hidden layers (Policy) & 3  \\
        Number of hidden units per layer (Policy) & 400  \\
        Number of hidden layers (Value) & 3  \\
        Number of hidden units per layer (Value) & 400 \\
        Learning rate & $3e^{-4}$ \\
        Batch size & 128 \\
        Temperature coefficient & Auto \\
        Target smoothing coefficient & $5e^{-3}$ \\
        Policy training delay & 2 \\
        Warm-start steps & $5e^4$ \\
        Exploration length $h$ & $50$ (MetaWorld) / $70$ (AntMaze)\\
        Truncation constant $\epsilon$ & $1e^{-4}$ \\
        UCB coefficient $c$ & $10$ (MetaWorld) / $1$ (AntMaze) \\
        \bottomrule
    \end{tabular}
    \label{tab:hyperparameter}
\end{table}

\subsection{Algorithm Implementation Details}
\label{app:setting:alg}
\textbf{SMEC}:~We use soft-actor critic~(SAC)~\cite{haarnoja2018soft} as our backbone algorithm. To train the value function proposed in Section~\ref{sec:method:evaluation}, we sample a batch of transitions from the replay buffer and compute the bellman targets for each policy following~(\ref{eq:value_loss_hybrid}). As for the interactions, we estimate the value of each policy via the target Q functions every $h$ step. We use the max value of the two target Q functions for better exploration. The policy with the maximum value estimation is adopted for the interactions in the subsequent $h$ steps. Before adding the value-guided behavior planning, we use a random policy before $5e^4$ steps for warmup exploration.

\textbf{AC-Teach}~\cite{kurenkov2020ac}:~
We adapt the backbone algorithm of the original implementation to soft-actor critic for fair comparisons. We use the same hyper-parameters as the original paper, using commitment decay $\phi=0.99$ and commitment threshold $\beta=0.6$.

\textbf{QMP}~\cite{zhang2023efficient}:~
QMP performs behavior sharing by considering all policies' actions and selecting the best one evaluated via the Q function. The behavior policy can be formulated as follows:
$$
\eta(a|s) := \delta_{a = {\arg\max}_{a \sim \{\pi(\cdot|s)\}\cup\{\mu_i(\cdot|s)\} } Q_\theta(s,a)},
$$
where $\delta$ denotes the Dirac delta distribution, and $Q_\theta$ denotes the behavior value function of the current task. As for our implementation, we query the actions of all policies and perform the value-guided action selection at each step, the same as the original implementation. 

\textbf{SkillS}~\cite{vezzani2022skills}:~
SkillS proposes to sequence the policies via a hierarchical architecture. We adapt the original implementation to the soft-actor critic version and train the meta controller with the policy gradient with soft Q estimations. For sample-efficient training, we perform data augmentation for the meta controller same as the original paper. 

\textbf{CUP}~\cite{zhangcup}:~
CUP trains the task policy by performing regularization to better actions proposed by prior policies. We use the same hyper-parameters by setting $\beta_1$ as $30$ and $\beta_2$ as $0.003$. Unlike the original implementation using vector environments for data collection, we use a single environment like the other algorithms and introduce the policy regularization after $100k$ steps. 

\textbf{MAMBA}~\cite{cheng2020policy}:~
MAMBA performs policy gradient updates by introducing the baseline value functions. We use the same hyperparameter configurations as the original implementation.

\textbf{MultiPolar}~\cite{barekatain2021multipolar}:~
MultiPolar aggregates the actions of all prior policies through an aggregation function and an auxiliary function. We use the open-sourced policy implementation and train the functions mentioned above with soft-actor critic algorithm.

\subsection{Hyper-parameter Details}
\label{app:setting:hyper}

Table~\ref{tab:hyperparameter} lists the hyperparameters used in our experiments. The hyperparameters related to SAC are shared across all baseline methods. 

\section{Addtional Experimental Results}
\label{app:addition}

\subsection{Utilization of Prior Policies}
\label{app:addition:full_utilization}

\begin{figure}[H]
    \centering
    \includegraphics[width=\textwidth]{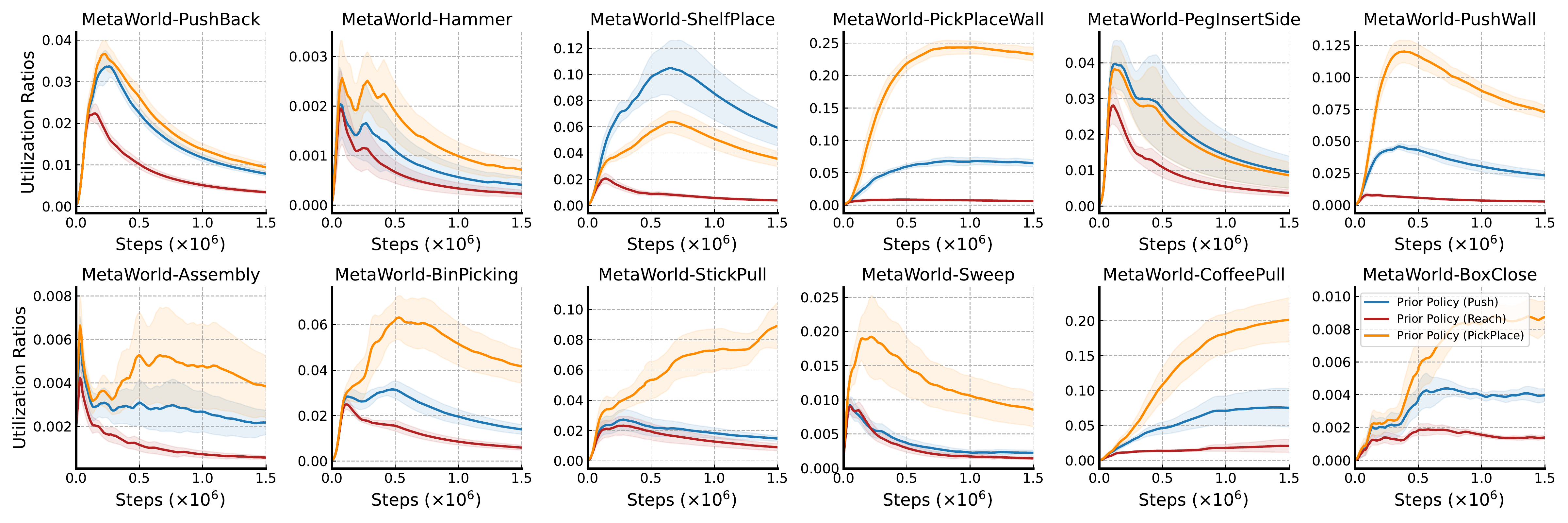}
    \caption{Utilization ratios of prior policies induced by SMEC in all MetaWorld tasks.}
    \label{fig:policy_utilization_metaworld}
\end{figure}
\begin{figure}[H]
    \centering
    \includegraphics[width=0.6\textwidth]{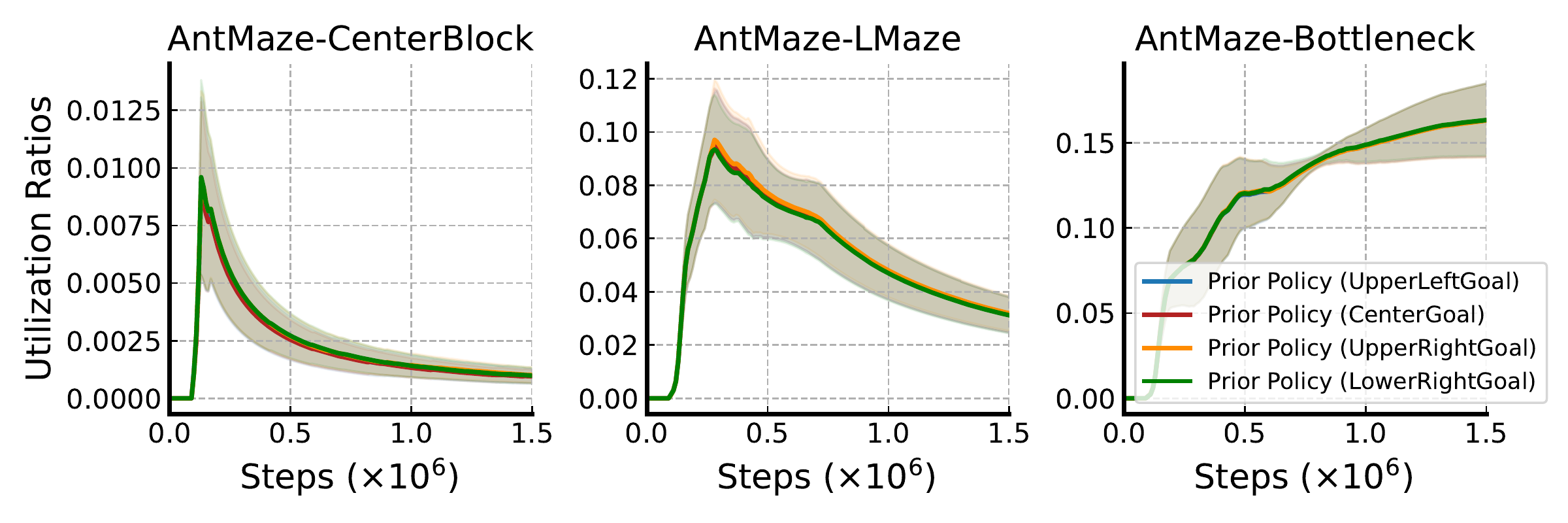}
    \caption{Utilization ratios of prior policies induced by SMEC in all AntMaze tasks.}
    \label{fig:policy_utilization_antmaze}
\end{figure}

We show the utilization ratios of our method across all tasks of MetaWorld and AntMaze in Figure~\ref{fig:policy_utilization_metaworld} and Figure~\ref{fig:policy_utilization_antmaze}, respectively. The utilization of the prior policies gradually increases at the early training stages, which guides the agent to form temporally-extended behaviors. As the training proceeds, the utilization of prior policies decreases thanks to the performance improvement of the task policy. 

\subsection{Investigation on Different Prior Policies}
\label{app:addition:more_priors}

\begin{figure}[t]
    \centering
    \includegraphics[width=0.95\textwidth]{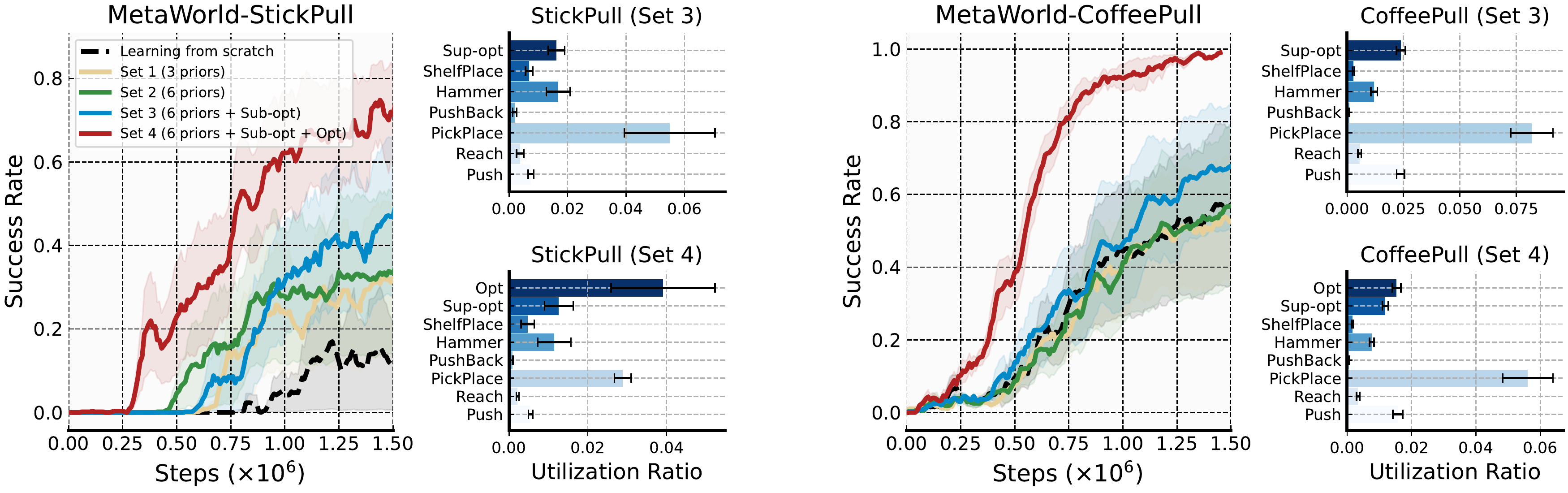}
    \caption{The learning curves and utilization ratios of prior policies in StickPull~(\textit{Left}) and CoffeePull~(\textit{Right}). "Sub-opt" denotes the sup-optimal task policy, and "Opt" denotes the optimal task policy.}
    \label{fig:more_priors_two_env}
\end{figure}
Intuitively, the training efficiency differs with different sets of prior policies. When there are related prior policies concerning the current task, the training efficiency would be enhanced with the assistance of the prior policies. To validate whether SMEC can identify and fully exploit the related prior policies, we perform experiments with different prior policy sets on two challenging MetaWorld tasks. Specifically, we set four different prior policy sets: Set 1~(Push, Reach, PickPlace); Set 2~(Push, Reach, PickPlace, PushBack, Hammer, ShelfPlace); Set 3~(Push, Reach, PickPlace, PushBack, Hammer, ShelfPlace, a sub-optimal task policy); Set 4~(Push, Reach, PickPlace, PushBack, Hammer, ShelfPlace, a sub-optimal task policy, an optimal task policy). The learning curves and prior utilization ratios are shown in Figure~\ref{fig:more_priors_two_env}, which show that the performance is significantly boosted by introducing the optimal task policy as one of the prior policies. Furthermore, our method identifies and maximally exploits the optimal policy in StickPull. While the optimal policy in CoffeePull is not the most selected, the resulting performance demonstrates that the optimal policy is rationally exploited to guide learning.

\subsection{Detailed Results of Ablation Studies}
\label{app:addition:ablation}

\begin{figure}[H]
    \centering
    \includegraphics[width=0.98\textwidth]{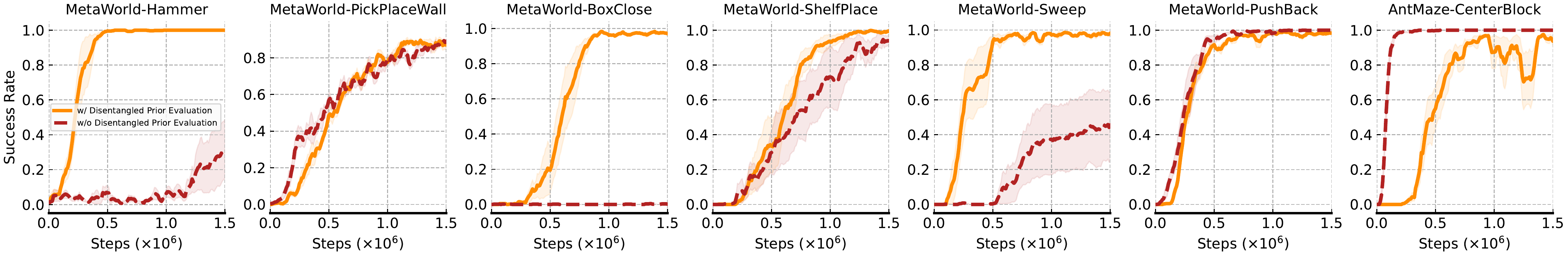}
    \caption{Full ablation results on the disentangled prior policy evaluation.}
    \label{fig:ablation_prior_evaluation_full}
\end{figure}
\textbf{Evaluation of prior policies.}~
We perform experiments on multiple tasks to validate the effect of the disentangled evaluation of prior policies. The detailed results on several tasks and the aggregated results are shown in Figure~\ref{fig:ablation_prior_evaluation_full}. The results demonstrate that the disentangled prior evaluation is crucial in 3 out of 7 tasks. The value estimation of the single behavior value function is inconsistent with the allocated policy, which would result in overly using the prior policies and thus hinder learning.

\begin{figure}[H]
    \centering
    \includegraphics[width=0.98\textwidth]{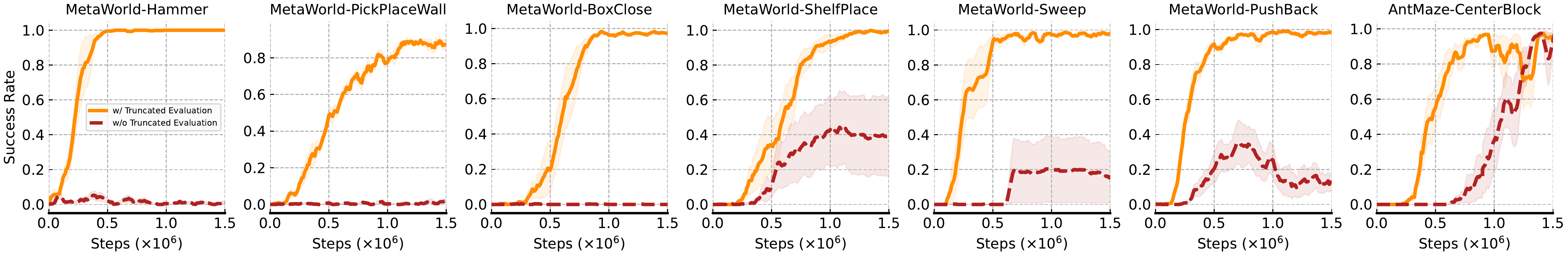}
    \caption{Full ablation results on the truncated behavior evaluation.}
    \label{fig:ablation_truncated_evaluation_full}
\end{figure}

\textbf{Short horizon evaluation.}~
We perform experiments on multiple tasks to validate the effect of truncated behavior evaluation. The detailed results on several tasks and the aggregated results are shown in Figure~\ref{fig:ablation_truncated_evaluation_full}. The results validate that the evaluation of the truncated behaviors is essential for efficient training across all seven tasks.

\begin{figure}[H]
    \centering
    \includegraphics[width=0.98\textwidth]{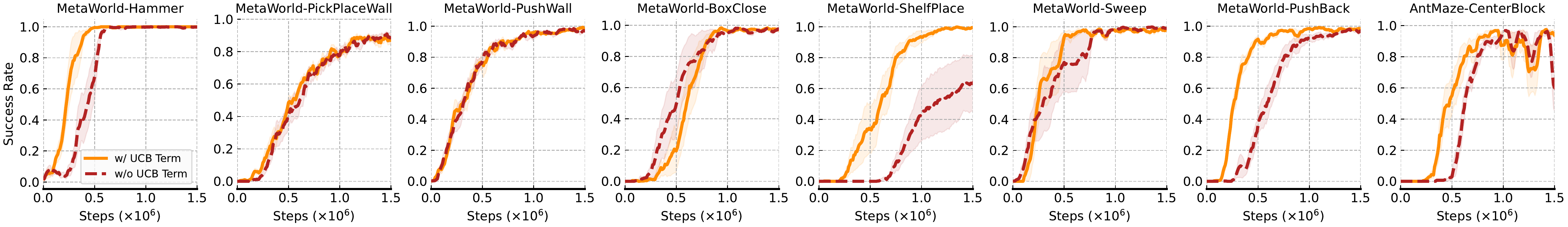}
    \caption{Full ablation results on the UCB term.}
    \label{fig:ablation_ucb_term_full}
\end{figure}
\textbf{UCB-term}
We perform experiments on multiple tasks to validate the effect of UCB term for policy selection. The detailed results on several tasks and the aggregated results are shown in Figure~\ref{fig:ablation_ucb_term_full}. The results demonstrate that the UCB-term can improve the sample efficiency in 3 out of 8 tasks.

\begin{figure}[H]
    \centering
    \includegraphics[width=0.85\textwidth]{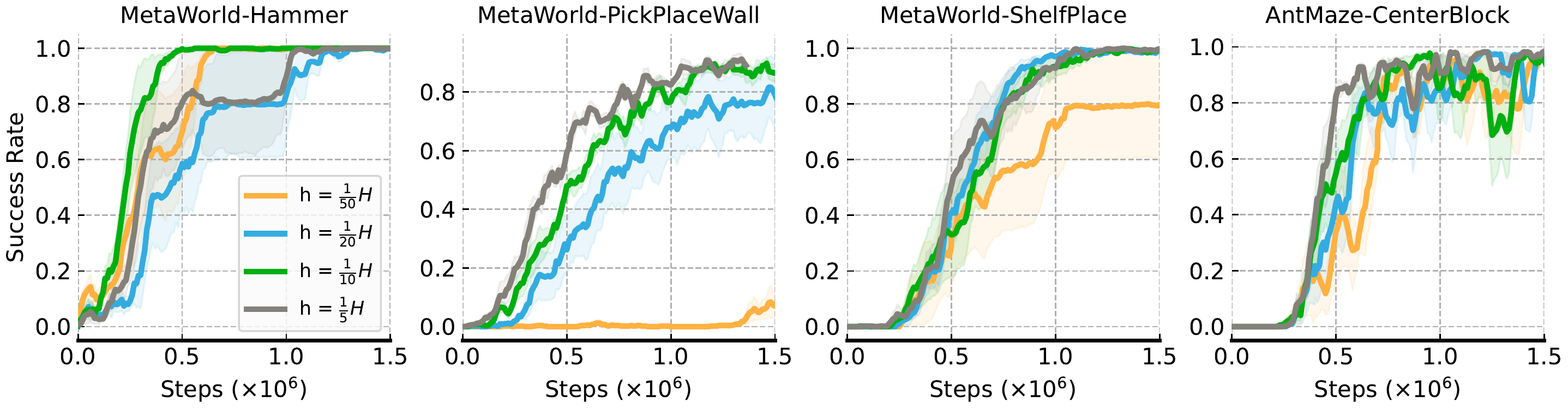}
    \caption{Full ablation results on the behavior length $h$.}
    \label{fig:ablation_exploration_length_full}
\end{figure}
\begin{figure}[H]
    \centering
    \includegraphics[width=0.98\textwidth]{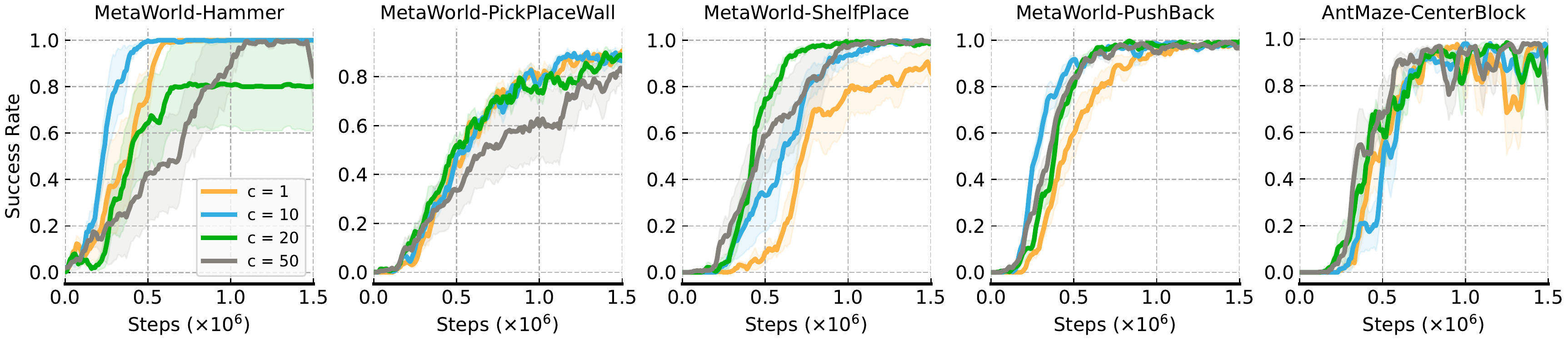}
    \caption{Full ablation results on the UCB coefficient $c$.}
    \label{fig:ablation_ucb_coeff_full}
\end{figure}

\textbf{Hyperparameter sensitivity}~
We perform ablation experiments on two hyper-parameters: behavior length $h$ and UCB coefficient $c$. As for the behavior length $h$, we compare the variants with four different values~(\textit{i.e.},~$H/50,~H/20,~H/10,~H/5$, and $H$ denotes the episode length of the environments), and the detailed results are shown in Figure~\ref{fig:ablation_exploration_length_full}, which demonstrate that the performance difference is not significant when the behavior length is sufficiently long. As for the UCB coefficient $c$, we use four different values (\textit{i.e.},~$1,~5,~20,~50$). The results are shown in Figure~\ref{fig:ablation_ucb_coeff_full}, which show that the UCB coefficient shows a minor impact on the performance. 

\subsection{Accuracy of Value Estimation on Prior Policies}
\label{app:addition:accuracy}

\begin{figure}[H]
    \centering
    \includegraphics[width=\textwidth]{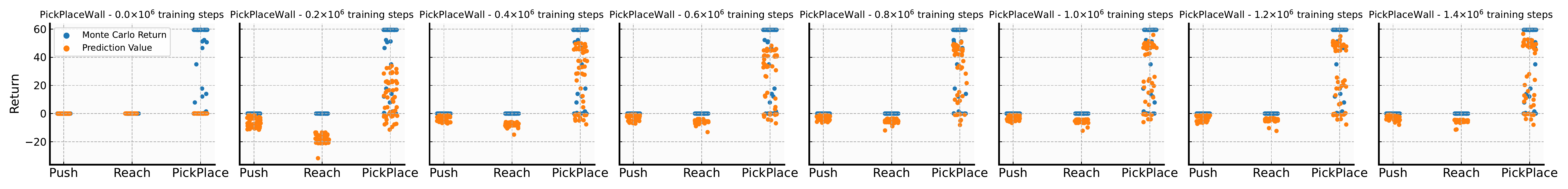}
    \caption{Comparisons between the value estimations of the prior policies and ground truth Monte Carlo returns along the training steps in PickPlaceWall.}
    \label{fig:value_acc_pickplacewall}
\end{figure}

\begin{figure}[H]
    \centering
    \includegraphics[width=\textwidth]{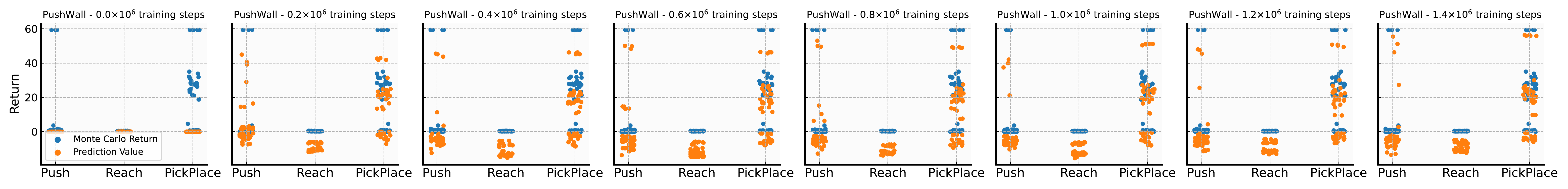}
    \caption{Comparisons between the value estimations of the prior policies and ground truth Monte Carlo returns along the training steps in PushWall.}
    \label{fig:value_acc_pushwall}
\end{figure}

In this subsection, we aim to investigate the accuracy of the learned value estimations of the prior policies, which plays a crucial role in validating the effect of our method. Specifically, we reload the checkpoints of the value function and compare the prediction values with the Monte Carlo returns of the prior policies. We show the results in PickPlaceWall and PushWall in Figure~\ref{fig:value_acc_pickplacewall} and Figure~\ref{fig:value_acc_pushwall}, respectively. The results indicate that the learned value functions can accurately estimate the return induced by the corresponding prior policies.

\subsection{Comparison with Random Policy Section}
\label{app:addition:random}

\begin{figure}[H]
    \centering
    \includegraphics[width=0.98\textwidth]{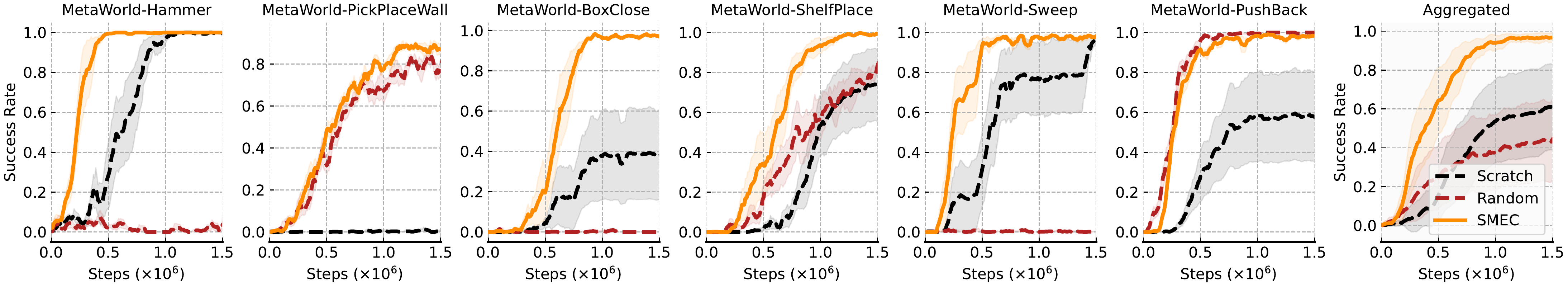}
    \caption{Comparison with the algorithm with random policy section. The rightmost plot denotes the aggregated results. \textit{Random} denotes the algorithm performing random policy selection per $h$ steps, and \textit{Scratch} denotes learning from scratch without prior policies. The results demonstrate that SMEC outperforms \textit{Random} and \textit{Random} even underperforms \textit{Scratch}, indicating value-guided behavior planning plays an essential role in the superior performance of SMEC.}
    \label{fig:ablation_random_full}
\end{figure}

This subsection aims to investigate the impact of value-guided behavior planning in reinforcement learning. To do so, we compare the performance of SMEC to a variant that randomly selects the behavior policy from the set of policies~(\textit{i.e.},~$\{\pi\}\cup\{\mu_i\}_{i=1}^K$) every $h$ steps. We present the results of this comparison in Figure~\ref{fig:ablation_random_full}, which shows that the variant barely works in three out of six tasks while nearly matching SMEC in the remaining tasks. We hypothesize that the variant's performance is closely related to the utilities of the prior policies with respect to the current task. Specifically, when the prior policies are highly relevant to the current task, the most relevant prior policy can provide near-optimal behaviors. Thus, randomly selecting the policy can yield sufficient data for efficient learning. However, if all the prior policies are irrelevant to the current task, randomly selecting the policy can impede learning. In contrast, as demonstrated by our experiments, SMEC can adaptively select the policy that leads to the most promising behaviors, resulting in consistently superior performance in all six tasks.

\subsection{Discussion on the Effect of $\bar{\gamma}$}
\label{app:addition:discussion_normal_gamma}

\begin{figure}[H]
    \centering
    \includegraphics[width=\textwidth]{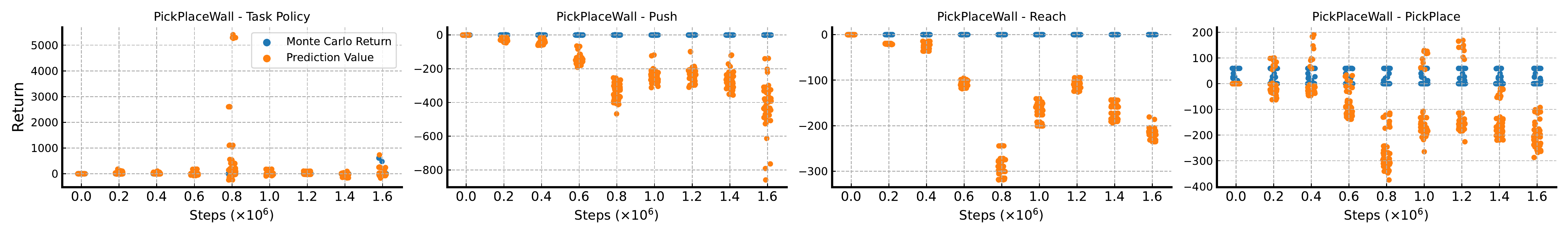}
    \caption{Under the algorithm variant using non-truncated horizon for evaluating prior policies~(\textit{i.e.},~$\bar{\gamma}=\gamma$), we compare the value estimations of all the policies and the ground truth Monte Carlo returns along the training steps in PickPlaceWall.}
    \label{fig:value_acc_pickplacewall_normal_gamma}
\end{figure}

The results presented in Figure~\ref{fig:effect_factor_multiple}~(Middle) and Figure~\ref{fig:ablation_truncated_evaluation_full} demonstrate a significant performance degradation when the horizon used for evaluating prior policies is not truncated~(\textit{i.e.},$\bar{\gamma}=\gamma$). We plot the value estimations throughout training to investigate the reasons behind these failures, as shown in Figure~\ref{fig:value_acc_pickplacewall_normal_gamma}. Our observations reveal that the value functions of the prior policies provide inaccurate predictions when using the non-truncated horizon. Moreover, the value estimations are unstable throughout training, leading to an underestimation of the value. We believe the phenomenon results from the severe off-policyness problem, \textit{i.e.},~the training data for the value function of the prior policies mostly comes from the task policy. However, the problem is circumvented by using the truncated horizon as shown in Figure~\ref{fig:value_acc_pickplacewall}, which can benefit from the regularization effect induced by the lower discount factor~\cite{jiang2015dependence,amit2020discount}. The unstable value estimations, in turn, can further influence policy selection and lead to performance degradation.

\subsection{Analysis of Computation Cost}
\label{app:addition:computation}

\begin{table}[t]
    \small
    \centering
    \caption{The training wall-clock time (hours) for 1 million training steps of the algorithms using SAC as the backbone algorithm. We report the mean and std of the wall-clock time across five runs with different random seeds.}
    \vspace{0.6em}
    \begin{tabular}{c|c|cccccc}
        \toprule
         Task & Scratch & MultiPolar & AC-Teach & CUP & SkillS & QMP & SMEC \\
         \midrule
         BoxClose & $6.43_{\pm0.14}$ & $8.09_{\pm0.27}$ & $8.26_{\pm0.21}$ & $8.38_{\pm0.18}$ & $14.73_{\pm0.05}$ & $7.74_{\pm0.32}$ & $8.01_{\pm0.08}$ \\
        \bottomrule
    \end{tabular}
    \label{tab:computation}
\end{table}

This subsection provides an analysis of the computation cost to investigate the computation efficiency of the algorithms. For the experiment on MetaWorld-BoxClose in Section~\ref{sec:exp:basic_results}, we compute the training wall clock times for 1 million training steps of several algorithms. The results shown in Table~\ref{tab:computation} demonstrate that SMEC outperforms four out of five baselines concerning the wall-clock efficiency. Compared with the \textit{Scratch} that learns without the prior policies, SMEC only takes about $24.6\%$ more wall-clock time to run the same number of environment steps.

\begin{table}[t]
    \small
    \centering
    \caption{The training wall-clock time (hours) for 1 million training steps of SMEC with different numbers of prior policies. We report the mean and std of the wall-clock time across five runs with different random seeds.}
    \vspace{0.6em}
    \begin{tabular}{c|c|cccc}
        \toprule
         Task & Scratch & 3 Prior policies & 6 Prior policies & 7 Prior policies & 8 Prior policies \\
         \midrule
         StickPull & $5.92_{\pm0.11}$ & $7.27_{\pm0.13}$ & $7.29_{\pm0.02}$ & $7.32_{\pm0.09}$ & $7.57_{\pm0.11}$\\
        \bottomrule
    \end{tabular}
    \label{tab:computation_different_priors}
\end{table}

Furthermore, we analyze the required computation cost of SMEC given different numbers of the prior policies. We compare the computation cost of the experiments in Section~\ref{sec:exp:identify}. The number of prior policies varies across the experiments (3/6/7/8). The computation cost of training SMEC under different prior policy settings is demonstrated in Table~\ref{tab:computation_different_priors}, which shows that the required additional computation cost of SMEC is acceptable~(only $27.8\%$ more wall-clock time than \textit{Scratch} under the 8 prior policies case).

\section{Extended Experiments on Continual Learning}
\label{app:continual}

\begin{figure}[H]
    \centering
    \includegraphics[width=\textwidth]{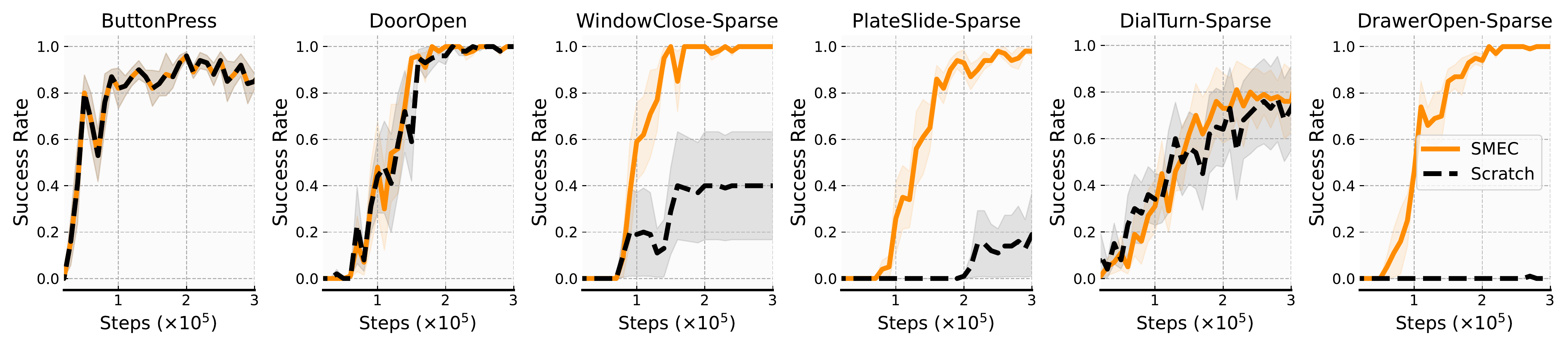}
    \caption{The results of continual learning experiments. We compare SMEC with \textit{Scratch} that learns without using the previous policies. The solid line and shaded regions represent the mean and standard deviation across five runs with different random seeds.}
    \label{fig:continal_with_scratch}
\end{figure}

Efficiently reusing previously learned policies is an appealing ability in various settings, especially in cases where abundant prior policies are available. In this section, we examine the effectiveness of SMEC in the Continual Reinforcement Learning setting~\cite{wolczyk2022disentangling}, where the cumulated policies are reused for efficient learning in the current task. 

We propose a sequence of 6 tasks from MetaWorld as a continual learning case (ButtonPress~$\rightarrow$~DoorOpen~$\rightarrow$~WindowClose~$\rightarrow$~PlaceSlide~$\rightarrow$~DialTurn~$\rightarrow$~DrawerOpen). Furthermore, we convert the dense reward functions of the last 4 tasks to the sparse variants that only provide non-zero rewards $1$ if the agent succeeds in the task. The two fundamental problems in continual learning are \textit{preventing catastrophic forgetting} (\textit{i.e.},~preventing the performance degradation of the policy concerning the previous tasks) and \textit{increasing forward transfer} (\textit{i.e.}, speeding up the learning by reusing knowledge from previous tasks). Since we only investigate whether our method can be helpful to speed up learning by reusing previously learned policies, we exclude the catastrophic forgetting problem by utilizing individual policy modules for each task. We launch an individual SAC algorithm and reuse all the previously learned policies for each task.

The results shown in Figure~\ref{fig:continal_with_scratch} demonstrate that SMEC is effective in the Continual Learning case by reusing all previous policies. Especially in the sparse reward tasks where learning from scratch hardly makes any progress in 3 out of 4 tasks, SMEC learns efficiently by exploiting the previous policies, which indicates SMEC is applicable and effective in the setting. 

\end{document}